\newtheorem{Theorem}{Theorem}[section]
\newtheorem{Lemma}[Theorem]{Lemma}
\theoremstyle{definition}
\DeclareMathOperator*{\argmax}{arg\,max}
\definecolor{rp}{RGB}{83,54,106}
\def\boxit#1{\vbox{\hrule\hbox{\vrule\kern6pt\vbox{\kern6pt#1\kern6pt}\kern6pt\vrule}\hrule}}
\begin{document}
\begin{frontmatter}
\title{Heterogeneous Dense Subhypergraph Detection}
\runtitle{Heterogeneous Dense Subhypergraph Detection}
\runauthor{M. Yuan and Z. Shang}
\begin{aug}
\author[A]{\fnms{Mingao} \snm{Yuan}\ead[label=e1]{mingao.yuan@ndsu.edu}}
\and
\author[B]{\fnms{Zuofeng} \snm{Shang}\ead[label=e2]{zshang@njit.edu}}
\address[A]{Department of Statistics,
North Dakota State University,
\printead{e1}}

\address[B]{Department of Mathematical Sciences,
New Jersey Institute of Technology,
\printead{e2}}
\end{aug}

\begin{abstract}
We study the problem of testing the existence of a heterogeneous dense subhypergraph. The null hypothesis corresponds to a heterogeneous Erd\"{o}s-R\'{e}nyi uniform random hypergraph and the alternative hypothesis corresponds to a heterogeneous uniform random hypergraph that contains a dense subhypergraph. We establish detection boundaries when the edge probabilities are known and construct an asymptotically powerful test for distinguishing the hypotheses. We also construct an adaptive test which does not involve edge probabilities, and hence, is more practically useful. 
\end{abstract}

\begin{keyword}[class=MSC2020]
\kwd[Primary ]{62G10}
\kwd[; Secondary ]{05C80,05C65}
\end{keyword}

\begin{keyword}
\kwd{detection boundary}
\kwd{heterogeneous uniform hypergraph}
\kwd{dense subhypergraph}
\kwd{asymptotically powerful test}
\end{keyword}

\end{frontmatter}

\section{Introduction}
\label{S:1}
Suppose $(\mathcal{V},\mathcal{E})$ is an \textit{undirected} $m$-uniform hypergraph on $N:=|\mathcal{V}|$ vertices with an edge set $\mathcal{E}$.
Each edge in $\mathcal{E}$ contains exactly $m$ distinct vertices. 
Without loss of generality, assume $\mathcal{V}=[N]:=\{1,2,\ldots,N\}$.
The adjacency tensor is an $m$-dimensional 0-1 symmetric array $A\in(\{0,1\}^N)^{\otimes m}$ such that $A_{i_1i_2\ldots i_m}=1$ if and only if $\{i_1,i_2,\ldots, i_m\}\in \mathcal{E}$. 
This implies that $A_{i_1i_2\ldots i_m}=0$ if $\{i_1,i_2,\ldots,i_m\}$ contains 
identical indexes, i.e., no self-loops are existent.
Symmetry is defined as $A_{i_1i_2\ldots i_m}=A_{j_1j_2\ldots j_m}$ if $i_1,i_2,\ldots,i_m$ is a permutation of $j_1,j_2,\ldots,j_m$. 
Let $p=\{p_{i_1i_2\ldots i_m}\in [0,1]: 1\leq i_1<i_2<\cdots i_m\leq N\}$ be a collection of edge-specific probability values. 
Let $\mathcal{H}_m(N,p)$ denote the
Erd\"{o}s-R\'{e}nyi $m$-uniform random hypergraph (see \cite{S02} for $m=2$).
Equivalently, $A_{i_1i_2\ldots i_m}$, $1\leq i_1<i_2<\cdots i_m\leq N$, are independent Bernoulli variables with 
\begin{equation}\label{bernoulli:H0}\mathbb{P}(A_{i_1i_2\ldots i_m}=1)=p_{i_1i_2\ldots i_m}.
\end{equation}
For a positive integer $n<N$, a subset $S\subset\mathcal{V}$ with $|S|=n$ and a scalar $\rho_S>1$, let $\mathcal{H}_m(N,p; n,\rho_S)$ denote an $m$-uniform random hypergraph with a dense subhypergraph $S$. Equivalently, $A_{i_1i_2\ldots i_m}$,
$1\leq i_1<i_2<\cdots<i_m\leq N$, are independent Bernoulli variables such that \begin{equation}\label{bernoulli:H1}
\mathbb{P}(A_{i_1i_2\ldots i_m}=1)=
\left\{
\begin{array}{cc}
p_{i_1i_2\ldots i_m}\rho_{S}, & i_1,i_2,\ldots,i_m\in S,\\
p_{i_1i_2\ldots i_m},& \textrm{otherwise}.
\end{array}\right.
\end{equation}
The assumption $\rho_{S}>1$ implies that the vertices within $S$ are more possibly connected, so $S$ can be viewed as an underlying dense subhypergraph. 
Since the Bernoulli probabilities in (\ref{bernoulli:H0}) and (\ref{bernoulli:H1}) are edge-specific, 
models $\mathcal{H}_m(N,p)$ and $\mathcal{H}_m(N,p; n,\rho_S)$ are both heterogeneous. Given $A$, we are interested in the following hypothesis testing problem:
\begin{eqnarray}\label{hypothesis}
\textrm{(null hypothesis) $H_0$:}&& \textrm{$A\sim \mathcal{H}_m(N,p)$} \nonumber\\
\textrm{(alternative hypothesis) $H_1$:}&& \textrm{there exists an $S\subset\mathcal{V}$ with $|S|=n$ and a $\rho_S>1$}\\ 
&&\textrm{such that
$A\sim \mathcal{H}_m(N,p; n,\rho_S)$}.\nonumber
\end{eqnarray}
The null hypothesis in (\ref{hypothesis}) says that $A$ follows an Erd\"{o}s-R\'{e}nyi $m$-uniform random hypergraph. The alternative hypothesis says that $A$ follows an $m$-uniform random hypergraph with an
underlying dense subhypergraph. 
 When $m=2$, \cite{BCHV19} derive detection boundaries for testing (\ref{hypothesis}). There is a lack of literature dealing with
 (\ref{hypothesis}) for arbitrary $m$ which will be investigated in this paper.

Given an observed hypergraph with an adjacency tensor $A$, a statistical test $T$ for testing (\ref{hypothesis}) is a function of $A$ such that $T=1$ if $H_0$ is rejected and $T=0$ otherwise. Define the risk of $T$ as
\[
\phi_N(T)=\mathbb{P}_0(T=1)+\max_{|S|=n}\mathbb{P}_S(T=0).
\]
Here $\mathbb{P}_0$ and $\mathbb{P}_S$ are the probability measures
under $H_0$ and $H_1$, respectively. If 
$\phi_N(T)\to0$ (or $\phi_N(T)\to1$),
we say that the test $T$ is asymptotically powerful (or asymptotically powerless).
In this paper, we provide 
conditions under which all tests for (\ref{hypothesis}) are asymptotically powerless. We also provide conditions under which
(\ref{hypothesis}) is asymptotically distinguishable. As an initial stage,
we consider the case of known $p$ and construct an asymptotically powerful test statistic. We then move forward to the more realistic unknown $p$ scenario and construct an adaptive test statistic. Our work is a hypergrahic extension of \cite{BCHV19}.
There are rich literature on classic homogeneous sub(hyper)graph detection or testing in which $p_{i_1i_2\ldots i_m}$ is constant.
For instance, \cite{LJY15, K11,ZHS06, BGK17,HWC17, CDKKR18,LZ20, YS21} 
proposed various detection algorithms. 
In stochastic block models,
various algorithms for detecting the underlying communities have been proposed in \cite{GD14,GD17,ACKZ15,B93,CK10,KBG17,LCW17,R09,RP13,KSX20,FP16,ALS16,ALS18}, and
\cite{YLFS20} established sharp boundaries for 
testing the existence of communities. More references could be found in the survey paper \cite{BTYZQ21}.
Nonetheless, problem (\ref{hypothesis}) is more challenging than the above homogeneous scenarios due to degree heterogeneity. 
 
The rest of the paper proceeds as follows. In Section \ref{sec2:subgraph}, we derive detection boundaries with all model parameters being known. In Section \ref{sec3:adaptive:test}, we construct an adaptive test to address the unknown edge probabilities. All additional proofs are deferred to Section \ref{sec4:proof}.

\section{Detection boundary when $p$ is known}\label{sec2:subgraph}
In this section, we derive detection boundaries
for testing (\ref{hypothesis}) when $p$ is known.
Assume that $n, N\rightarrow\infty$ and 
\begin{equation}\label{assum1}
\max\limits_{S\subset \mathcal{V},|S|=n}\max\limits_{\{i_1,\ldots,i_m\}\subset S}\rho_S^2p_{i_1i_2\ldots i_m}=o(1).
\end{equation}
Condition (\ref{assum1}) implies that the hypergraph is suitably sparse.
 Let $A_D=\sum_{i_1,\ldots,i_m\in D}A_{i_1i_2\ldots i_m}$
 which is the number of edges of the subhypergraph restricted to 
 the vertex set $D\subset \mathcal{V}$. Under $H_0$,
 the edge rate is $\mu_{D,m}=\mathbb{E}_0[A_D]/|D|^{(m)}$ with $|D|^{(m)}:=\binom{|D|}{m}$. Here $\mathbb{E}_0$ is the expectation under $H_0$. Denote $x\wedge y=\min\{x,y\}$, $x\vee y=\max\{x,y\}$ and $[x]_+=\max\{x,0\}$. Let 
 \[h(x)=(x+1)\log(x+1)-x,\,\,\,\,\,\, \forall x\in[0,1]\]
 which is related to the Kullback-Leibler divergence between two Bernoulli distributions.
Consider the following two scenarios regarding $n,N,m,\mu_{D,m}$.
\begin{itemize}[leftmargin=20mm]
\item[\bf Scenario 1.] There exist $\delta\in(0,0.5)$ and a positive sequence $\gamma_N=o(1)$ such that $n=O(N^{0.5-\delta})$ and
\begin{equation}\label{assum2}
\max\limits_{S\subset \mathcal{V},|S|=n,}\max\limits_{D\subset S,|D|<n/(N/n)^{\gamma_N}}\frac{|D|^{(m)}N}{N^{(m)}|D|}\frac{\mu_{D,m}}{\mu_{S,m}}\leq\delta,
\end{equation}
\begin{equation}\label{assum3}
\max\limits_{S\subset \mathcal{V},|S|=n,}\frac{1}{\mu_{S,m}}=o\left(\frac{n^{m-1}}{\log\frac{N}{n}}\right).
\end{equation}

\item[\bf Scenario 2.] Suppose $\log{n}=o(\log{N})$ and 
\begin{equation}\label{assum4}
\max\limits_{S\subset \mathcal{V},|S|=n}\frac{1}{\mu_{S,m}}=o\left(\frac{\log\frac{N}{n}}{\log n}\right)=o(1).
\end{equation}
\end{itemize}
The two scenarios above impose suitable assumptions on $n,N,m,\mu_{D,m}$.
For instance, (\ref{assum2}) says that the ratio of
$\mu_{D,m}$ to $\mu_{S,m}$ is well controlled when $D\subset S$ has small cardinality,
i.e., a small $D\subset S$ with large node degrees is nonexistent.
Hence, (\ref{assum2}) could be considered as a measure of heterogeneity.  Condition (\ref{assum3}) or (\ref{assum4}) requires the edge density in the dense subhypergraph being not too small, which implies that the underlying 
subhypergraph has enough signal to be detected. The conditions
degenerate to \cite{BCHV19} when $m=2$. For $m\geq3$ and $\mu_{S,m}=\mu_{S,2}$, condition (\ref{assum3}) is weaker than $m=2$.
The theorem below provides a circumstance
that the hypotheses in (\ref{hypothesis}) are asymptotically indistinguishable,
which is a generalization of \cite{BCHV19} to arbitrary fixed $m\geq3$. The indistinguishable regions for $m\geq3$ and $m=2$ are significantly different and the proof is technically much more involved. 
\begin{Theorem}\label{thm:1}
Under either Scenario I or Scenario II,
all tests are asymptotically powerless if there exists a constant $\epsilon\in(0,1)$ such that
\begin{equation}\label{cond:2}
\max\limits_{\substack{S\subset \mathcal{V}\\
|S|=n}}\max\limits_{D\subset S} \frac{|D|^{(m)}}{|D|}\frac{\mu_{D,m}h(\rho_S-1)}{\log\frac{N}{|D|}}\leq 1-\epsilon.
\end{equation}
\end{Theorem}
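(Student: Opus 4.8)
The plan is to prove this impossibility statement by the \emph{truncated (conditional) second moment method}. First I would reduce to a total–variation bound: put the uniform prior $\pi$ on $\{S\subset\mathcal V:|S|=n\}$, each $S$ carrying its prescribed $\rho_S$, write $\mathbb P_\pi=\binom Nn^{-1}\sum_S\mathbb P_S$, and note that for every test $T$, $\phi_N(T)\ge\mathbb P_0(T=1)+\mathbb P_\pi(T=0)\ge 1-d_{\mathrm{TV}}(\mathbb P_0,\mathbb P_\pi)$, so it suffices to show $d_{\mathrm{TV}}(\mathbb P_0,\mathbb P_\pi)\to0$. With $L_S=d\mathbb P_S/d\mathbb P_0=\prod_{e\in\binom Sm}\rho_S^{A_e}\bigl((1-\rho_Sp_e)/(1-p_e)\bigr)^{1-A_e}$ (writing $A_e=A_{i_1\cdots i_m}$, $p_e=p_{i_1\cdots i_m}$), a one–line computation gives the overlap identity $\mathbb E_0[L_SL_{S'}]=\prod_{e\in\binom{S\cap S'}m}\bigl(1+(\rho_S-1)(\rho_{S'}-1)p_e/(1-p_e)\bigr)$, so the second moment of $L=\binom Nn^{-1}\sum_SL_S$ is $\mathbb E_{S,S'}$ of this product with $S,S'$ i.i.d.\ $\pi$. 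By \eqref{assum1} every factor is $1+o(1)$, and $1+x\le e^x$ with $\sum_{e\in\binom Dm}p_e=|D|^{(m)}\mu_{D,m}$ bounds the product by $\exp\!\bigl((1+o(1))(\rho_S-1)(\rho_{S'}-1)\,|S\cap S'|^{(m)}\mu_{S\cap S',m}\bigr)$; but forcing this expectation to be $1+o(1)$ would require \eqref{cond:2} with $(\rho_S-1)^2$ in place of $h(\rho_S-1)$, which is strictly stronger since $h(x)<x^2$ on $(0,1]$. Hence the plain second moment fails and one must truncate the likelihood ratio.

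Second, for each $S$ I would introduce the event $\Gamma_S$ that $A_D\le(1+\xi_N)\rho_S^{\,1+\eta_N}\,\mathbb E_0[A_D]$ for every $D\subset S$ whose null mean $\mathbb E_0[A_D]$ is not negligibly small (say $|D|$ above the threshold of \eqref{assum2}), where $\xi_N,\eta_N\to0$ slowly, and set $\tilde L_S=L_S\mathbf 1_{\Gamma_S}$, $\tilde L=\binom Nn^{-1}\sum_S\tilde L_S$. Standard manipulations give $d_{\mathrm{TV}}(\mathbb P_0,\mathbb P_\pi)\le\tfrac12\sqrt{\mathbb E_0[\tilde L^2]-(\mathbb E_0\tilde L)^2}+\binom Nn^{-1}\sum_S\mathbb P_S(\Gamma_S^c)$, reducing matters to (i) $\mathbb P_S(\Gamma_S^c)\to0$ uniformly in $S$ (which also yields $\mathbb E_0\tilde L=\binom Nn^{-1}\sum_S\mathbb P_S(\Gamma_S)\to1$), and (ii) $\mathbb E_0[\tilde L^2]\to1$. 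For (i), under $\mathbb P_S$ the mean of $A_D$, $D\subset S$, is only $\rho_S\mathbb E_0[A_D]$, so the cap exceeds it by the factor $(1+\xi_N)\rho_S^{\eta_N}>1$, and a Bennett--Chernoff estimate with a union bound over the $D\subset S$ entering $\Gamma_S$ gives (i), using \eqref{assum3} (resp.\ \eqref{assum4}) to make the means $\mathbb E_0[A_D]$ large enough for these deviations to be summable and \eqref{assum2} to control the heterogeneous means for the smaller $D$. For (ii), I would bound $\mathbb E_0[\tilde L_S\tilde L_{S'}]\le\mathbb E_0[L_SL_{S'}\mathbf 1\{A_D\le t_D\}]$ with $D=S\cap S'$, then use $\mathbf 1\{A_D\le t_D\}\le\lambda^{A_D-t_D}$ for $\lambda\in(0,1)$; edges outside $D$ factor out to $1$, and the edges of $\binom Dm$ contribute, after the \emph{exact} minimization over $\lambda$ — the optimizer tends to $1/(\rho_S\vee\rho_{S'})$ as $\xi_N,\eta_N\to0$, and the Cram\'er transform then returns the per–edge value $p_e\,h(\rho_S-1)$ in the diagonal case $S=S'$ and $\le p_e\bigl(h(\rho_S-1)\vee h(\rho_{S'}-1)\bigr)$ in general, which is precisely how $(\rho-1)^2$ becomes $h(\rho-1)$ — at most $\exp\!\bigl((1+o(1))|D|^{(m)}\mu_{D,m}\,(h(\rho_S-1)\vee h(\rho_{S'}-1))\bigr)$.

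Third, I would sum: $\mathbb E_0[\tilde L^2]\le 1+\sum_{k\ge m}\mathbb P_\pi(|S\cap S'|=k)\,\max_{|D|=k}\exp\!\bigl((1+o(1))|D|^{(m)}\mu_{D,m}(h(\rho_S-1)\vee h(\rho_{S'}-1))\bigr)$, since pairs with $|S\cap S'|<m$ give exactly $\mathbb E_0[L_SL_{S'}]=1$. I would split the range of $k$ into several regimes, using a different estimate of $|D|^{(m)}\mu_{D,m}(h(\rho_S-1)\vee h(\rho_{S'}-1))$ in each: for the very small $k$, $\binom{|D|}m$ is tiny and $\rho_S^2p_e=o(1)$ by \eqref{assum1}, so this quantity is $o(1)$; for intermediate $k$ (below the cutoff $n(n/N)^{\gamma_N}$ of \eqref{assum2}) one controls $\mu_{D,m}$ via \eqref{assum2} together with the $D=S$ instance of \eqref{cond:2}; for $k$ up to $n$ one uses the $D$–instance of \eqref{cond:2} directly, getting an exponent $\le(1-\epsilon+o(1))\,k\log(N/|D|)$. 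Combining each regime with the hypergeometric bound $\mathbb P_\pi(|S\cap S'|=k)\le(en^2/(kN))^k$ and with the scenario constraints ($n=O(N^{1/2-\delta})$ in Scenario~1, $\log n=o(\log N)$ together with the stronger density bound \eqref{assum4} in Scenario~2) makes each regime's contribution decay geometrically in $N$ or in $k$, so the whole sum is $o(1)$; the constant gap $\epsilon$ in \eqref{cond:2} is exactly what flips the relevant exponents negative. With (i) this gives $d_{\mathrm{TV}}(\mathbb P_0,\mathbb P_\pi)\to0$, hence $\phi_N(T)\to1$ for every test $T$.

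The hard part will be the calibration of the truncation together with the bookkeeping in the last step: the cap $(1+\xi_N)\rho_S^{1+\eta_N}\mathbb E_0[A_D]$ must be low enough that the Cram\'er minimization still returns the sharp constant $h$ rather than $(\rho_S-1)^2$, yet high enough that the alternative–side deviations in (i) are summable over the many subsets $D\subset S$; $\xi_N$ and $\eta_N$ must degenerate slowly enough to cost only a factor $1+o(1)$ in every exponent while still dominating the fluctuations of the heterogeneous means $\mathbb E_0[A_D]$; and, as already stressed in the introduction, for $m\ge3$ the overlap sum must be pushed through several more regimes than for $m=2$, with the two scenarios genuinely requiring different estimates (this is where \eqref{assum2} and the case split really bite). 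By contrast the overlap identity, the Cram\'er computation producing $h$, and the hypergeometric bounds are routine.
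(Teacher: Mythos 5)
Your plan is essentially the paper's proof: a truncated second-moment argument reducing to $\mathbb{E}_0\tilde L\to1$ and $\mathbb{E}_0\tilde L^2\le 1+o(1)$, with per-subset caps on $A_D$ over $D\subset S$, Bennett's inequality plus a union bound for the first moment, a Chernoff/Cram\'er tilting that converts $(\rho_S-1)^2$ into $h(\rho_S-1)$ for overlaps in the ``large-mean'' class, the plain exponential bound combined with the hypergeometric overlap tail for the remaining overlaps, and the scenario-dependent regime split in the overlap size. The only substantive difference is the calibration of the truncation level: the paper defines it implicitly through $(1+\epsilon)\mathbb{E}_0[A_D]h(\zeta_D-1)=|D|\log\frac{N}{|D|}$ on the class $E_S$ (Lemmas \ref{lem1} and \ref{lem2}), which makes the union bound automatic, whereas your multiplicative cap $(1+\xi_N)\rho_S^{1+\eta_N}\mathbb{E}_0[A_D]$ leaves open exactly the tuning of $\xi_N,\eta_N$ that you yourself flag as the hard part.
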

Condition (\ref{cond:2}) 
is equivalent to that for any $S\subset \mathcal{V}$ with $|S|=n$,
\begin{equation}\label{cond:2'}
h(\rho_S-1)\le \frac{1-\epsilon}{\max\limits_{D\subset S}\frac{|D|^{(m)}\mu_{D,m}}{|D|\log{\frac{N}{|D|}}}}.
\end{equation}
Theorem \ref{thm:1} says that if $\rho_S$ is close to one uniformly for $S$ of cardinality $n$, then the hypotheses in (\ref{hypothesis}) are  indistinguishable. When $m=2$, (\ref{cond:2}) degenerates to condition (5) in \cite{BCHV19}. To gain more insights about how (\ref{cond:2}) varies along with $m$, we suppose $\mu_{D,m}=\mu_{D,2}$ for $m\geq3$, i.e., the edge rates are constant along with $m$. Since $|D|^{(m)}$ increases in $m$, 
the RHS of (\ref{cond:2'}) decreases in $m$, i.e.,
the range of $\rho_S$ becomes smaller. This implies that 
testing (\ref{hypothesis}) for general $m\ge3$ becomes easier than $m=2$.
Proof of Theorem \ref{thm:1} proceeds by showing that the hypotheses in (\ref{hypothesis})
are asymptotically mutually contiguous under (\ref{cond:2}) which requires analyzing the likelihood ratio statistic.

\begin{proof}[Proof of Theorem \ref{thm:1}] 
For $i_1<i_2<\cdots<i_m$, denote $\mathbf{i}_m=\{i_1,\ldots,i_m\}\subset D$ and
denote $p_{i_1\ldots i_m}$ as $p_{\mathbf{i}_m}$. Let $\theta_{\mathbf{i}_m}(q)=\log\frac{q(1-p_{\mathbf{i}_m})}{p_{\mathbf{i}_m}(1-q)}$ for any $q\in(0,1)$, and $\Lambda_{\mathbf{i}_m}(\theta)=\log(1-p_{\mathbf{i}_m}+p_{\mathbf{i}_m}e^{\theta})$. 
Let $H_{p}(q)$ be the Kullback-Leibler divergence from $\text{Bern}(q)$ to $\text{Bern}(p)$ defined as
$H_{p}(q)=q\log\frac{q}{p}+(1-q)\log\frac{1-q}{1-p}$, for $p, q\in (0,1)$.
For a given subset $S\subset [N]$ with $|S|=n$, the likelihood ratio statistics based on (\ref{hypothesis}) is equal to 
\begin{eqnarray}\nonumber
L_S&=&\frac{\prod_{\substack{i_1<\cdots<i_m\\ \mathbf{i}_m\in S}}(\rho_Sp_{\mathbf{i}_m})^{A_{\mathbf{i}_m}}(1-\rho_Sp_{\mathbf{i}_m})^{1-A_{ \mathbf{i}_m}}}{\prod_{\substack{i_1<\cdots<i_m\\ \mathbf{i}_m\in S}}(p_{\mathbf{i}_m})^{A_{ \mathbf{i}_m}}(1-p_{\mathbf{i}_m})^{1-A_{ \mathbf{i}_m}}}=\exp\left\{\sum_{\mathbf{i}_m\in S}\left[A_{\mathbf{i}_m}\theta_{\mathbf{i}_m}(\rho_Sp_{\mathbf{i}_m})-\Lambda_{\mathbf{i}_m}(\theta_{\mathbf{i}_m}(\rho_Sp_{\mathbf{i}_m}))\right]\right\}.
\end{eqnarray}
Then it's easy to express the unconditional likelihood ratio statistic as
$L=\binom{N}{n}^{-1}\sum_{|S|=n}L_S$.
We adopt the likelihood ratio truncation skill used in \cite{AV14,BI13} to get
$\tilde{L}=\binom{N}{n}^{-1}\sum_{|S|=n}L_SI_{\Gamma_S}$,
where $I_E$ is an indicator function for event $E$ and 
\[\Gamma_S=\left\{\sum_{\mathbf{i}_m\in D}A_{\mathbf{i}_m}\theta_{\mathbf{i}_m}(\rho_Sp_{\mathbf{i}_m})\leq \sum_{\mathbf{i}_m\in D}\zeta_D\theta_{\mathbf{i}_m}(\rho_Sp_{\mathbf{i}_m}),\ all\ D\in E_S\right\},\]
with $\zeta_D$ provided in Lemma \ref{lem1}.
We will proceed by showing $\mathbb{E}_0\tilde{L}=1+o(1)$ and $\mathbb{E}_0\tilde{L}^2\leq 1+o(1)$, where $\mathbb{E}_0$ denotes expectation under $H_0$.

We begin with the first-order moment. It is easy to verify $\mathbb{E}_0\tilde{L}=\mathbb{P}_S\Gamma_S$. Note that $\theta_{\mathbf{i}_m}(\rho_Sp_{\mathbf{i}_m})\sim \log\rho_S$ uniformly for all $\mathbf{i}_m\in S$ with $|S|=n$ by a similar proof of equation (54) in \cite{BCHV19}. By Bennett's inequality, it follows that
\begin{eqnarray}\nonumber
1-\mathbb{P}_S\Gamma_S=\mathbb{P}_S\Gamma_S^c&\leq&\sum_{D\in E_S}\mathbb{P}\left(\sum_{\mathbf{i}_m\in D}A_{\mathbf{i}_m}>\zeta_D\sum_{\mathbf{i}_m\in D}p_{\mathbf{i}_m}\right)
\leq\sum_{D\in E_S}\exp\left(-\rho_S\sum_{\mathbf{i}_m\in D}p_{\mathbf{i}_m}h\left(\frac{\zeta_D}{\rho_S}-1\right)\right).
\end{eqnarray}
By Lemma \ref{lem2}, we get
\begin{eqnarray*}
1-\min_{S\subset \mathcal{V},|S|=n}\mathbb{P}_S\Gamma_S&\leq&\max_{S\subset \mathcal{V},|S|=n}\sum_{D\in E_S}\exp\left(-\rho_S\sum_{\mathbf{i}_m\in D}p_{\mathbf{i}_m}h\left(\frac{\zeta_D}{\rho_S}-1\right)\right)\\
&\leq&\max_{S\subset \mathcal{V},|S|=n}\sum_{k=m}^n\binom{N}{n}\exp\left(-\rho_S\sum_{\mathbf{i}_m\in D}p_{\mathbf{i}_m}h\left(\frac{\zeta_D}{\rho_S}-1\right)\right)\\
&\leq&\max_{S\subset \mathcal{V},|S|=n}\sum_{k=m}^n\exp\left(k\log\frac{ne}{k}-\rho_S\sum_{\mathbf{i}_m\in D}p_{\mathbf{i}_m}h\left(\frac{\zeta_D}{\rho_S}-1\right)\right)\\
&\leq&\sum_{k=m}^ne^{-k(c_n-1)}=o(1).
\end{eqnarray*}
Hence, $\mathbb{E}_0\tilde{L}=1+o(1)$.

We next consider the second-order moment. For $S_1,S_2\subset \mathcal{V}$ with $|S_1|=|S_2|=n$, let $D=S_1\cap S_2$. Clearly, we have the following
\begin{eqnarray*}
\mathbb{E}_0\tilde{L}^2&=&\binom{N}{n}^{-2}\sum_{|S_1|=n,|S_2|=n}\mathbb{E}_0L_{S_1}I_{\Gamma_{S_1}}L_{S_2}I_{\Gamma_{S_2}}\\
&=&\mathbb{E}_0\left(I_{\Gamma_{S_1}}I_{\Gamma_{S_2}}\exp\left(\sum_{i_m\in D}A_{\mathbf{i}_m}[ \theta_{\mathbf{i}_m}(\rho_{S_1}p_{\mathbf{i}_m})+\theta_{\mathbf{i}_m}(\rho_{S_2}p_{\mathbf{i}_m})]-\Lambda_{\mathbf{i}_m}(\theta_{\mathbf{i}_m}(\rho_{S_1}p_{\mathbf{i}_m}))-\Lambda_{\mathbf{i}_m}(\theta_{\mathbf{i}_m}(\rho_{S_2}p_{\mathbf{i}_m}))\right)\right)\\
&\leq&\mathbb{E}\left[I_{D\in E_{S_1}}\mathbb{E}_0\left(I_{\Gamma_{S_1}}\exp\left(\sum_{\mathbf{i}_m\in D}2A_{\mathbf{i}_m}[ \theta_{\mathbf{i}_m}(\rho_{S_1}p_{\mathbf{i}_m})-2\Lambda_{\mathbf{i}_m}(\theta_{\mathbf{i}_m}(\rho_{S_1}p_{\mathbf{i}_m}))\right)\right)\right]\\
& &+\mathbb{E}\left[I_{D\not\in E_{S_1}}\mathbb{E}_0\left(I_{\Gamma_{S_1}}\exp\left(\sum_{\mathbf{i}_m\in D}2A_{\mathbf{i}_m}[ \theta_{\mathbf{i}_m}(\rho_{S_1}p_{\mathbf{i}_m})-2\Lambda_{\mathbf{i}_m}(\theta_{\mathbf{i}_m}(\rho_{S_1}p_{\mathbf{i}_m}))\right)\right)\right]\\
&=&R_1+R_2.
\end{eqnarray*}
We shall show $R_1=o(1)$ and $R_2\leq1+o(1)$. Since $p_{\mathbf{i}_m}=o(1)$, we have
\[\Delta_{\mathbf{i}_m}^{(1)}:=\log\left(1+\frac{p_{\mathbf{i}_m}(\rho_{S_1}-1)^2}{1-p_{\mathbf{i}_m}}\right)\leq p_{\mathbf{i}_m}(\rho_{S_1}-1)^2.\]
Besides,
\[\mathbb{P}(|D|=k)\leq \exp\left(-k\left(\log\frac{Nk}{n^2}+O(1)\right)\right).\]
Consequently, by the definition of $E_{S_1}$ one has
\begin{eqnarray*}
R_2&\leq& \mathbb{E}\left[I_{D\not\in E_{S_1}}\exp\left(\sum_{\mathbf{i}_m\in D}\Delta_{\mathbf{i}_m}^{(1)}\right)\right]\\
&\leq&\mathbb{E}\left[I_{D\not\in E_{S_1}}\exp\left((1-\frac{\epsilon}{2})|D|\left(\log\frac{N|D|}{n^2}-\log\log\frac{N}{n}\right)\right)\right]\\
&\leq&1+\sum_{k=m}^n\exp\left(k\left(\log\frac{Nk}{n^2}-\log\log\frac{N}{n}\right)-k\left(\log\frac{Nk}{n^2}+O(1)\right)\right)=1+o(1).
\end{eqnarray*}

Note that $\frac{\log\zeta_D}{\log \rho_{S_1}}\sim \frac{\theta_{\mathbf{i}_m}(\zeta_Dp_{\mathbf{i}_m})}{\theta_{\mathbf{i}_m}(\rho_{S_1}p_{\mathbf{i}_m})}$ uniformly for $\mathbf{i}_m\in D$, $D\in E_{S_1}$ and $|S_1|=n$. For any $x\in [0,1]$ and $D\in E_{S_1}$, we have
\begin{eqnarray*}
R_1&\leq&\mathbb{E}\left[I_{D\in E_{S_1}}\mathbb{E}_0\left(\exp\left(\sum_{\mathbf{i}_m\in D}2\theta_{\mathbf{i}_m}(\rho_{S_1}p_{\mathbf{i}_m})\left(xA_{\mathbf{i}_m}+(1-x)\zeta_Dp_{\mathbf{i}_m} \right)-2\Lambda_{\mathbf{i}_m}(\theta_{\mathbf{i}_m}(\rho_{S_1}p_{\mathbf{i}_m}))\right)\right)\right],
\end{eqnarray*}
which is minimized at $x=\frac{\theta_{\mathbf{i}_m}(\zeta_Dp_{\mathbf{i}_m})}{2\theta_{\mathbf{i}_m}(\rho_{S_1}p_{\mathbf{i}_m})}\sim \frac{\log\zeta_D}{2\log \rho_{S_1}}$ by (50) in \cite{BCHV19}. Plugging this into the above equation yields
 \begin{eqnarray*}
R_1&\leq&\mathbb{E}\left[I_{D\in E_{S_1}}\mathbb{E}_0\exp\left(\sum_{\mathbf{i}_m\in D}\Delta_{\mathbf{i}_m}^{(2)}\right)\right]\leq \sum_{k=m}^n\exp\left(k(-2c_n+O(1))\right)=o(1),
\end{eqnarray*}
where $\Delta_{\mathbf{i}_m}^{(2)}=H_{p_{\mathbf{i}_m}}(\zeta_Dp_{\mathbf{i}_m})-2H_{\rho_{S_1}p_{\mathbf{i}_m}}(\zeta_Dp_{\mathbf{i}_m})$, the second inequality follows from the proof of Theorem 1 in \cite{BCHV19} and the last step follows from Lemma \ref{lem2}. Then the proof is complete.
\end{proof}

Next, we shall show that the condition (\ref{cond:2}) in Theorem \ref{thm:1} is also necessary for indistinguishability.
Define the hypergraphic scan statistic as
\begin{equation}\label{scan}
T_n=\max\limits_{D\subset \mathcal{V},|D|\leq n}T_D,\,\,\,\,\text{where}\,\,\,\, T_D=\frac{|D|^{(m)}}{|D|}\cdot\frac{\mu_{D,m}h\left([\frac{A_D}{\mathbb{E}_0[A_D]}-1]_+\right)}{\log\frac{N}{|D|}}.
\end{equation}
In the above, $\mathbb{E}_0[A_D]=\sum_{\textbf{i}_m\in D}p_{\textbf{i}_m}$ which is available given that $p$ is known.
For any $S\subset\mathcal{V}$ with $|S|=n$, let 
\[D^*_S=\argmax_{D\subset S} \frac{|D|^{(m)}}{|D|}\cdot\frac{\mu_{D,m}}{\log\frac{N}{|D|}}.\]

\begin{Theorem}\label{thm:2}
Suppose $n=o(N)$ and $\rho_S\mathbb{E}_0[A_{D^*_S}]\rightarrow\infty$. The scan test $T_n$ is asymptotically powerful if 
there exists a constant $\epsilon\in(0,1)$ such that
\begin{equation}\label{cond:3}
\min\limits_{\substack{S\subset \mathcal{V}\\
|S|=n}}\max\limits_{D\subset S} \frac{|D|^{(m)}}{|D|}\cdot\frac{\mu_{D,m}h(\rho_S-1)}{\log\frac{N}{|D|}}\geq 1+\epsilon.
\end{equation}
\end{Theorem}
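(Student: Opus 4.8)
The plan is to analyze the scan test that rejects $H_0$ precisely when $T_n>1+\epsilon/2$ (any fixed threshold strictly between $1$ and $1+\epsilon$ would do), and to establish $\mathbb{P}_0(T_n>1+\epsilon/2)\to0$ together with $\max_{|S|=n}\mathbb{P}_S(T_n\le1+\epsilon/2)\to0$, whence $\phi_N(T)\to0$. Two elementary facts about $h$ (extended to $[0,\infty)$ by the same formula, on which it is a strictly increasing convex bijection with $h(0)=h'(0)=0$) will be used repeatedly: $h(x)\le x^2$ for all $x\ge0$, and $h(cx)\ge c^2h(x)$ for $c\in(0,1]$, $x\ge0$ (the latter by checking that $c\mapsto h(cx)/c^2$ is nonincreasing). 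I also use $\mathbb{E}_0[A_D]=|D|^{(m)}\mu_{D,m}$, so that $T_D=\frac{\mathbb{E}_0[A_D]}{|D|\log(N/|D|)}\,h([A_D/\mathbb{E}_0[A_D]-1]_+)$.

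For the null control I would fix $D$ with $|D|=k\in\{m,\dots,n\}$ and note that under $H_0$, $A_D$ is a sum of independent Bernoulli variables with mean $\mathbb{E}_0[A_D]$, so the Chernoff bound gives $\mathbb{P}_0(A_D\ge(1+t)\mathbb{E}_0[A_D])\le\exp(-\mathbb{E}_0[A_D]h(t))$ for every $t\ge0$. Since $\{T_D>1+\epsilon/2\}$ equals $\{A_D/\mathbb{E}_0[A_D]-1>t_k\}$ with $h(t_k)=(1+\epsilon/2)k\log(N/k)/\mathbb{E}_0[A_D]$, this yields $\mathbb{P}_0(T_D>1+\epsilon/2)\le\exp(-(1+\epsilon/2)k\log(N/k))$. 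A union bound over the $\binom{N}{k}\le(eN/k)^k$ sets $D$ of size $k$ gives $\mathbb{P}_0(\max_{|D|=k}T_D>1+\epsilon/2)\le\exp(k-\tfrac{\epsilon}{2}k\log(N/k))$, and because $k\le n=o(N)$ forces $\log(N/k)\ge\log(N/n)\to\infty$, summing over $m\le k\le n$ shows $\mathbb{P}_0(T_n>1+\epsilon/2)\to0$. This is the hypergraphic analogue of the scan argument in \cite{BCHV19}, essentially a Chernoff-plus-union-bound computation.

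For the power, fix any $S$ with $|S|=n$, work under $\mathbb{P}_S$, and take $D=D^*_S$, $\lambda=\mathbb{E}_0[A_{D^*_S}]$. Since $h(\rho_S-1)$ is constant in $D$, the maximizer in (\ref{cond:3}) is $D^*_S$ itself, so $\frac{\mathbb{E}_0[A_{D^*_S}]}{|D^*_S|\log(N/|D^*_S|)}h(\rho_S-1)\ge1+\epsilon$. Under $H_1$, $\mathbb{E}_S[A_{D^*_S}]=\rho_S\lambda$ and $\mathrm{Var}_S(A_{D^*_S})\le\rho_S\lambda$, so Chebyshev's inequality gives, for any $\omega_N\to\infty$, that $A_{D^*_S}/\lambda\ge\rho_S-\omega_N\sqrt{\rho_S/\lambda}$ with $\mathbb{P}_S$-probability $1-O(\omega_N^{-2})$. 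The key quantitative claim is that $\lambda(\rho_S-1)^2/\rho_S\to\infty$ uniformly in $S$: if $\rho_S\le2$ this follows from $h(x)\le x^2$ and (\ref{cond:3}), since then $\lambda(\rho_S-1)^2/\rho_S\ge\tfrac12\lambda h(\rho_S-1)\gtrsim\log(N/n)$; if $\rho_S>2$ it follows from the assumption $\rho_S\mathbb{E}_0[A_{D^*_S}]\to\infty$, since then $\lambda(\rho_S-1)^2/\rho_S\ge\tfrac14\rho_S\lambda$. Choosing $\omega_N\to\infty$ slowly enough (e.g. $\omega_N^4\le\inf_{|S|=n}\lambda_S(\rho_S-1)^2/\rho_S$) makes $\omega_N\sqrt{\rho_S/\lambda}=o(\rho_S-1)$, so with $\mathbb{P}_S$-probability $\to1$ (uniformly in $S$), $[A_{D^*_S}/\lambda-1]_+\ge(1-\eta_N)(\rho_S-1)$ with $\eta_N\to0$; then $h([A_{D^*_S}/\lambda-1]_+)\ge(1-\eta_N)^2h(\rho_S-1)$, hence $T_n\ge T_{D^*_S}\ge(1-\eta_N)^2(1+\epsilon)\ge1+\epsilon/2$ for large $N$. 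Therefore $\max_{|S|=n}\mathbb{P}_S(T_n\le1+\epsilon/2)\to0$, and combining with the null bound, $\phi_N(T)\to0$.

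The step I expect to be the main obstacle is precisely the near-boundary regime of the power analysis, i.e. when $\rho_S\to1$: there the signal $\rho_S-1$ vanishes, and the argument of $h$ in $T_{D^*_S}$ is dominated by the random fluctuation of $A_{D^*_S}$ about its mean unless one shows that fluctuation is of strictly smaller order than $\rho_S-1$. Making this rigorous requires combining condition (\ref{cond:3}) (which forces $h(\rho_S-1)\gtrsim\log(N/|D^*_S|)/\mathbb{E}_0[A_{D^*_S}]$, hence $(\rho_S-1)^2\gtrsim\log(N/|D^*_S|)/\mathbb{E}_0[A_{D^*_S}]$ via $h(x)\sim x^2/2$ near $0$), the assumption $\rho_S\mathbb{E}_0[A_{D^*_S}]\to\infty$, and the scaling of $h$ near $0$ and at $\infty$. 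A secondary point requiring care is keeping every estimate uniform over the $\binom{N}{n}$ candidate subsets $S$, which is needed because the risk involves $\max_{|S|=n}\mathbb{P}_S(T=0)$.
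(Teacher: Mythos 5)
Your proposal is correct and follows essentially the same route as the paper, which itself only cites the argument of Theorem 2 in \cite{BCHV19}: a Bennett/Chernoff bound of the form $\mathbb{P}_0(A_D\ge(1+t)\mathbb{E}_0[A_D])\le\exp(-\mathbb{E}_0[A_D]h(t))$ plus a union bound over $\binom{N}{k}$ sets for the type I error, and concentration of $A_{D^*_S}$ under $\mathbb{P}_S$ combined with the scaling properties of $h$ (in particular $h(cx)\ge c^2h(x)$ for $c\in(0,1]$ and $h(x)\le x^2$) for the type II error. Your write-up supplies the details the paper omits, and the handling of the near-boundary regime $\rho_S\to1$ via the claim $\mathbb{E}_0[A_{D^*_S}](\rho_S-1)^2/\rho_S\to\infty$ is sound and uniform in $S$ as required.
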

Theorem \ref{thm:2} can be proved similarly as
Theorem 2 in \cite{BCHV19}. Specifically, it applies the Bennett's inequality to show that the $\gamma_N$-risk
of the scan test $T_n$ tends to zero. Theorems \ref{thm:1} and \ref{thm:2} together depict a detection boundary for
testing (\ref{hypothesis}).


\section{An adaptive test}\label{sec3:adaptive:test}
In practice, $p$ is often unknown so the
test statistic $T_n$ may not be applicable.
Instead, we will propose a new test which is adaptive to the value of $p$. 
This problem is challenging since $p$ contains $N^{(m)}$ unknown parameters
and estimation of these large amount of parameters seems infeasible. To reduce the amount of overparametrization, consider a special case that $p_{i_1\ldots i_m}=\prod_{k=1}^mW_{i_k}$ for an unknown vector $W=(W_1,\ldots,W_N)$ with $W_i\in[0,1]$.  When $m=2$, this is just the rank-1 model studied in \cite{BCHV19}. Moreover, assume that
\begin{equation}\label{assumrank1}
\left(\frac{W_{max}}{W_{min}}\right)^m=o\left(n^{\frac{m}{m+1}}\wedge\left[W_{min}^m\left(\frac{N}{n}\right)^{m-1-\delta_m}\right]\right),
\end{equation}
where $\delta_m=0$ for even $m$ and $\delta_m\in(0,1)$ for odd $m$, $W_{max}=\max\left\{W_1,\ldots,W_N\right\}$ and $W_{min}=\min\{W_1,\ldots,W_N\}$. Note that the RHS of (\ref{assumrank1}) 
converges to zero faster when $m$ is even.
Condition (\ref{assumrank1}) accommodates heterogeneity in the hypergraph. To see this, consider $n=\sqrt{N}$ and $W_i=\left(\frac{i}{N}\right)^{\frac{1}{k(m+1)}}$ in which $k\geq 4$ is a constant. Since $\sum_{i=1}^NW_i\asymp \frac{k(m+1)}{k(m+1)+1}N$,
it is easy to verify that the average degree of node $N$ (of order $N^{m-1}$) is approximately $N^{\frac{1}{k(m+1)}}$ times of the average degree of node 1 (of order $N^{m-1-\frac{1}{k(m+1)}}$).

When the edge probability $p$ is unknown, we need to modify the scan test. Essentially, we have to estimate $\mathbb{E}_0[A_D]$ for any subset $D\subset \mathcal{V}$ by $\widehat{p}_{D,m}$ defined as follows:
\begin{equation*}\label{modscan}
    \widehat{p}_{D,m}=\frac{1}{2^m}\left(A_{\mathcal{V}}^{\frac{1}{m}}-\left(A_{\mathcal{V}}-2A_{D,D^c}\right)^{\frac{1}{m}}\right)^m,
\end{equation*}
where
\[
A_{D,D^c}=\frac{1}{m!}\sum_{k=1}^{t_m}\binom{m}{2k-1}\sum_{\substack{\{i_1,\ldots,i_{2k-1}\}\subset D^c\\ \{i_{2k},\ldots,i_m\}\subset D}}A_{i_1\ldots i_m},\]
$t_m=\frac{m}{2}$ for even $m$, and $t_m=\frac{m+1}{2}$ for odd $m$, and $D^c=\mathcal{V}-D$. 
Define
\[\widehat{p}_{D,m}^*=\widehat{p}_{D,m}\vee \frac{|D|^m}{N^{m-1}}\log^{2m}\frac{N}{|D|}.\]
We then propose the following modified scan test statistic:
\begin{equation}\label{scan}
\widehat{T}_n=\max\limits_{D\subset \mathcal{V},|D|\leq n}\widehat{T}_D,\ \ \ \ \widehat{T}_D=\frac{\widehat{p}_{D,m}^*h\left([\frac{A_D}{\widehat{p}_{D,m}^*}-1]_+\right)}{|D|\log\frac{N}{|D|}}.
\end{equation}
Note that $\widehat{T}_n$ does not involve $p$ and hence is adaptive.
 Theorem \ref{thm:3} shows that $\widehat{T}_n$ is asymptotically powerful under the condition (\ref{cond:3}). 

\begin{Theorem}\label{thm:3}
Suppose $n=o(N)$ and $\rho_S\mathbb{E}_0[A_{D^*_S}]\rightarrow\infty$. 
If (\ref{cond:3}) holds,
then the modified scan test $\widehat{T}_n$ is asymptotically powerful.
\end{Theorem}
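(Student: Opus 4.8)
The plan is to reduce Theorem~\ref{thm:3} to Theorem~\ref{thm:2} by showing that, with high probability, the data-driven normalizer $\widehat p_{D,m}^*$ is a sufficiently accurate surrogate for the true mean $\mathbb{E}_0[A_D]=\sum_{\mathbf{i}_m\in D}p_{\mathbf{i}_m}$ uniformly over all $D\subset\mathcal{V}$ with $|D|\le n$. Concretely, under the rank-one model $p_{i_1\ldots i_m}=\prod_{k=1}^m W_{i_k}$ one has $A_{\mathcal{V}}\approx \frac{1}{m!}\big(\sum_i W_i\big)^m$ and $A_{\mathcal{V}}-2A_{D,D^c}\approx\frac{1}{m!}\big(\sum_{i\in D}W_i\big)^m$ via the binomial/multinomial expansion $(\sum_{i\in\mathcal V}W_i)^m=\sum_{\ell=0}^m\binom{m}{\ell}(\sum_{i\in D^c}W_i)^\ell(\sum_{i\in D}W_i)^{m-\ell}$, so that $\widehat p_{D,m}\approx \frac{1}{2^m\,m!}\big((\sum_i W_i)-|(\sum_i W_i)-2\sum_{i\in D}W_i|\big)^m\asymp (\sum_{i\in D}W_i)^m\asymp \mathbb{E}_0[A_D]$ when $\sum_{i\in D}W_i\le\frac12\sum_{i\in\mathcal V}W_i$. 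The truncation at $\frac{|D|^m}{N^{m-1}}\log^{2m}\frac{N}{|D|}$ is there to stabilize the estimator for the smallest sets $D$, where the estimation error is comparable to the signal; on such sets $\widehat T_D$ is automatically negligible, so they cannot spuriously exceed the threshold.

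The key steps, in order, would be: \textbf{(i)} Establish a uniform concentration bound for $A_{D,D^c}$ and $A_{\mathcal V}$ around their means. Since $A_{\mathcal V}$ is a sum of independent Bernoulli's with mean of order $N^m W_{min}^m\to\infty$ (too large to be tiny), a Bernstein/Bennett bound gives $A_{\mathcal V}=\mathbb{E}_0[A_{\mathcal V}](1+o(1))$ w.h.p.; for $A_{D,D^c}$ one needs the deviation to hold simultaneously over all $\binom{N}{|D|}$ choices of $D$, which costs a factor $|D|\log\frac{N}{|D|}$ in the exponent, hence the need for $\mathbb{E}_0[A_D]\gtrsim |D|\log\frac{N}{|D|}$ — precisely where assumption~(\ref{assumrank1}) and the truncation level enter. \textbf{(ii)} Propagate these bounds through the smooth maps $x\mapsto x^{1/m}$ and $x\mapsto x^m$ (a delta-method/mean-value estimate) to conclude $\widehat p_{D,m}^* = \mathbb{E}_0[A_D](1+o(1))$ uniformly on the "good" range of $D$, and $\widehat p_{D,m}^*\ge \mathbb{E}_0[A_D]$-type lower control elsewhere. \textbf{(iii)} Under $H_0$, plug the uniform approximation into $\widehat T_D$ and mimic the $\gamma_N$-risk argument from Theorem~\ref{thm:2}: since $h$ is increasing and $\widehat p_{D,m}^*\asymp\mathbb{E}_0[A_D]$, $\widehat T_n$ is stochastically comparable to a constant multiple of $T_n$, and Bennett's inequality plus a union bound over $D$ gives $\mathbb{P}_0(\widehat T_n > \text{threshold})\to 0$. \textbf{(iv)} Under $H_1$, for the planted set $S$ take $D=D^*_S$; then $A_{D^*_S}$ has mean $\rho_S\mathbb{E}_0[A_{D^*_S}]\to\infty$ while $\widehat p_{D^*_S,m}^*$ still estimates the null mean $\mathbb{E}_0[A_{D^*_S}]$ (the planted edges form a vanishing fraction of the edges counted in $A_{\mathcal V}$ and $A_{D,D^c}$, because $n=o(N)$ and (\ref{assum1})), so $\frac{A_{D^*_S}}{\widehat p_{D^*_S,m}^*}-1\approx \rho_S-1$, and condition (\ref{cond:3}) forces $\widehat T_{D^*_S}$ above the threshold w.h.p.

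The main obstacle is Step~(i)–(ii): controlling the estimator $\widehat p_{D,m}$ \emph{uniformly} over exponentially many subsets $D$, simultaneously in the regime where $\mathbb{E}_0[A_D]$ ranges from barely-diverging to very large, and tracking how the heterogeneity ratio $(W_{max}/W_{min})^m$ degrades the approximation through the nonlinear $m$-th root and $m$-th power. In particular one must verify that the error term in $\widehat p_{D,m}=\mathbb{E}_0[A_D](1+o(1))$ is $o(1)$ relative to the \emph{signal}, not merely relative to the total edge count — this is exactly what assumption~(\ref{assumrank1}) is calibrated to guarantee, with the extra slack $\delta_m\in(0,1)$ for odd $m$ absorbing the fact that for odd $m$ the map $x\mapsto x^{1/m}$ is only Hölder-controlled near the problematic values. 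A secondary technical point is handling the absolute value inside $\widehat p_{D,m}$ (equivalently, the cases $\sum_{i\in D}W_i \lessgtr \tfrac12\sum_i W_i$); since $|D|\le n=o(N)$ and the $W_i$ are comparable up to $W_{max}/W_{min}$, one checks $\sum_{i\in D}W_i\le \tfrac12\sum_{i\in\mathcal V}W_i$ holds for all relevant $D$, so the estimator is in its "correct branch" throughout.
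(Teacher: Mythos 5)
Your overall architecture is the same as the paper's: concentrate $A_{\mathcal V}$ and $A_{D,D^c}$ around their means uniformly via Bennett plus a union bound, push the approximation through the map $(x,y)\mapsto(x^{1/m}-(x-2y)^{1/m})^m$ by a mean-value/delta-method argument (the paper's Lemma \ref{rank1lemma2}), verify that the normalizer does not under-estimate $\mathbb{E}_0[A_D]$ under $H_0$ so that the monotonicity of $a\mapsto a\,h([x/a-1]_+)$ reduces the type~I error to that of Theorem \ref{thm:2}, and under $H_1$ evaluate the scan at $D^*_S$ using that the planted edges are a vanishing fraction of those entering $A_{\mathcal V}$ and $A_{D^*,D^{*c}}$. (One slip in your heuristic: $A_{\mathcal V}-2A_{D,D^c}\approx\frac{1}{m!}\bigl(\sum_{i\in D^c}W_i-\sum_{i\in D}W_i\bigr)^m$, not $\frac{1}{m!}\bigl(\sum_{i\in D}W_i\bigr)^m$; your subsequent displayed formula for $\widehat p_{D,m}$ is consistent with the correct identity, so this is cosmetic.)

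There is, however, a genuine gap in your step (iv). You assume that at the optimizing set $D^*_S$ the normalizer satisfies $\widehat p^*_{D^*_S,m}\approx\mathbb{E}_0[A_{D^*_S}]$, but $\widehat p^*_{D,m}$ is defined as a maximum with the floor $\frac{|D|^m}{N^{m-1}}\log^{2m}\frac{N}{|D|}$, and nothing prevents this floor from being \emph{active at $D^*_S$ itself}, i.e.\ $\mathbb{E}_0[A_{D^*_S}]<\frac{|D^*|^m}{N^{m-1}}\log^{2m}\frac{N}{|D^*|}$. In that regime the normalizer strictly over-estimates the null mean, the ratio $A_{D^*}/\widehat p^*_{D^*,m}$ is deflated, and your claim that it is $\approx\rho_S$ fails; your remark that truncated sets ``are automatically negligible'' only disposes of them for the type~I error, not for the set you need to detect under $H_1$. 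The paper closes this case with a separate argument: condition (\ref{cond:3}) together with $h^{-1}(x)\ge\sqrt{x}$ forces $\rho_S>\bigl(\frac{N^{m-1}}{|D^*|^{m-1}}\log^{-(2m-1)}\frac{N}{|D^*|}\bigr)^{1/2}$, whence $\mathbb{E}_1[A_{D^*}]/\widehat p^*_{D^*,m}\ge\rho_S/\log^{2m}\frac{N}{|D^*|}\to\infty$, and then $h(x-1)\sim x\log x$ yields $\widehat p^*_{D^*,m}h\bigl([\frac{A_{D^*}}{\widehat p^*_{D^*,m}}-1]_+\bigr)\ge(1+\epsilon)|D^*|\log\frac{N}{|D^*|}$ despite the inflated denominator. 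Relatedly, you should justify why it suffices to control the estimator only for $|D|\ge n^{1/(m+1)}$: this is the content of the paper's Lemma \ref{rank1lemma1}, which shows under (\ref{assumrank1}) that $|D^*_S|\ge n^{1/(m+1)}$, so smaller sets never carry the signal. Without these two ingredients the reduction to Theorem \ref{thm:2} is incomplete.
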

Based on Theorem \ref{thm:3}, the modified scan test $\widehat{T}_n$ still can achieve the detection boundary in Theorem \ref{thm:2}.
Moreover, the rate $n^{\frac{m}{m+1}}$ is optimal under Scenario I and Scenario 2 and it can't be improved.
 
The main ingredient in the proof of Theorem \ref{thm:3}
is to show that $\widehat{p}_{D,m}$ can accurately estimate $\mathbb{E}_0[A_D]$,
hence, $\widehat{T}_n$ will perform similarly as $T_n$.
Before proving Theorem \ref{thm:3}, let us provide a quick sketch
on the estimation accuracy of $\widehat{p}_{D,m}$.
Under assumption (\ref{assumrank1}), it can be shown that
for any $D\subset\mathcal{V}$, $\sum_{i\in \mathcal{V}-D}W_i-\sum_{i\in D}W_i$
is non-negative for large $N$.
Therefore, we have
\begin{eqnarray*}
2\sum_{i\in D}W_i&=&\sum_{i\in \mathcal{V}}W_i-\left(\sum_{i\in \mathcal{V}-D}W_i-\sum_{i\in D}W_i\right)
=\left[\left(\sum_{i\in \mathcal{V}}W_i\right)^m\right]^{\frac{1}{m}}-\left[\left(\sum_{i\in \mathcal{V}-D}W_i-\sum_{i\in D}W_i\right)^m\right]^{\frac{1}{m}}.
\end{eqnarray*}
Note that
\[
\left(\sum_{i\in \mathcal{V}}W_i\right)^m=m!\mathbb{E}_0A_{\mathcal{V}}+\sum_{\substack{|\{i_1,i_2,\ldots,i_m\}|\leq m-1\\
\{i_1,\ldots,i_m\}\subset \mathcal{V}}}W_{i_1}\cdots W_{i_m}
\]
and, by binomial formula,
\begin{eqnarray*}
&&\left(\sum_{i\in \mathcal{V}-D}W_i-\sum_{i\in D}W_i\right)^m\\
&=&\left(\sum_{i\in \mathcal{V}-D}W_i+\sum_{i\in D}W_i\right)^m-2\sum_{k=1}^{t_m}\binom{m}{2k-1}\left(\sum_{i\in \mathcal{V}-D}W_i\right)^{2k-1}\left(\sum_{i\in D}W_i\right)^{m-2k+1}\\
&=&m!\mathbb{E}_0A_{\mathcal{V}}+\sum_{\substack{|\{i_1,i_2,\ldots,i_m\}|\leq m-1\\
\{i_1,\ldots,i_m\}\subset \mathcal{V}}}W_{i_1}\cdots W_{i_m}-
2\sum_{k=1}^{t_m}\binom{m}{2k-1}\left(\sum_{i\in \mathcal{V}-D}W_i\right)^{2k-1}\left(\sum_{i\in D}W_i\right)^{m-2k+1}.
\end{eqnarray*}
By law of large number, it can be shown that 
$A_{\mathcal{V}}=(1+o_P(1))\mathbb{E}_0[A_{\mathcal{V}}]$,
and for any $D\subset\mathcal{V}$,
$A_{D,D^c}=(1+o_P(1))\mathbb{E}_0[A_{D,D^c}]$.
By assumption (\ref{assumrank1}),
it can be shown that, for any $D\subset\mathcal{V}$,
\[
\mathbb{E}_0[A_{D}]\gg\frac{1}{m!}\sum_{\substack{|\{i_1,i_2,\ldots,i_m\}|\leq m-1\\
\{i_1,\ldots,i_m\}\subset D}}W_{i_1}\cdots W_{i_m},
\]
which leads to that
\begin{eqnarray*}
\left(\sum_{i\in \mathcal{V}}W_i\right)^m&=&(1+o_P(1))m!A_{\mathcal{V}},\\
\left(\sum_{i\in \mathcal{V}-D}W_i-\sum_{i\in D}W_i\right)^m
&=&(1+o_P(1))m!A_{\mathcal{V}}-2(1+o_P(1))m!A_{D,D^c}.
\end{eqnarray*}
Hence, we get
\begin{eqnarray}\nonumber
\mathbb{E}_0[A_D]&=&\sum_{\substack{i_1<i_2<\cdots<i_m\\
\{i_1,\ldots,i_m\}\subset D}}W_{i_1}\cdots W_{i_m}\\ \nonumber
&=&\frac{1}{2^m m!}\left(2\sum_{i\in D}W_i\right)^m-
\frac{1}{m!}\sum_{\substack{|\{i_1,i_2,\ldots,i_m\}|\leq m-1\\
\{i_1,\ldots,i_m\}\subset D}}W_{i_1}\cdots W_{i_m}\\ \nonumber
&=&\frac{1+o_P(1)}{2^m}\left(A_{\mathcal{V}}^{\frac{1}{m}}-(A_{\mathcal{V}}-2A_{D,D^c})^{\frac{1}{m}}\right)^m-o(\mathbb{E}_0[A_{\mathcal{D}}])\\ \label{e0d}
&=&(1+o_P(1))\widehat{p}_{D,m}-o(\mathbb{E}_0[A_{\mathcal{D}}]).
\end{eqnarray}
Therefore, $(1+o_P(1))\widehat{p}_{D,m}=(1+o_P(1))\mathbb{E}_0[A_D]$,
i.e., $\widehat{p}_{D,m}$ is proven a good estimator of $\mathbb{E}_0[A_D]$.

\begin{proof}[Proof of Theorem \ref{thm:3}] 
Firstly, we control the type I error. Note that for $a\geq b$, $ah\left(\left[\frac{x}{a}-1\right]_+\right)\leq bh\left(\left[\frac{x}{b}-1\right]_+\right)$. We only need to prove
\begin{equation}\label{hatp1}
\max\limits_{n^{\frac{1}{m+1}}\leq|D|\leq n}\frac{\mathbb{E}_0[A_{D}]}{\widehat{p}_{D,m}^*}\leq 1+o_p(1).
\end{equation}
Define $\mathcal{D}=\left\{D\subset \mathcal{V}:n^{\frac{1}{m+1}}\leq|D|\leq n, \widehat{p}_{D,m}^*\leq \mathbb{E}_0[A_{D}] \right\}$. It suffices to prove (\ref{hatp1}) for $D\in\mathcal{D}$. By the definition of $\widehat{p}_{D,m}^*$, we have
\[\frac{|D|^m}{N^{m-1}}\log^{2m}\frac{N}{|D|}\leq \widehat{p}_{D,m}^*\leq \mathbb{E}_0[A_{D}]\leq \left(\sum_{i\in D}W_i\right)^m,\]
which implies $\sum_{i\in D}W_i\geq\frac{|D|}{N^{\frac{m-1}{m}}}\log^{2}\frac{N}{|D|}$. Besides, by assumption (\ref{assumrank1}), we have $W_{min}\geq \frac{1}{N^{\frac{m-1-\delta_m}{m}}}$. Hence,
\begin{eqnarray}\nonumber
\mathbb{E}_0[A_{D,D^c}]&=&\frac{1}{m!}\sum_{k=1}^{t_m}\binom{m}{2k-1}\left(\sum_{i\in \mathcal{V}-D}W_i\right)^{2k-1}\left(\sum_{i\in D}W_i\right)^{m-2k+1}\\  \nonumber
&\geq&\frac{1}{m!}\sum_{k=1}^{t_m}\binom{m}{2k-1}(N-|D|)^{2k-1}W_{min}^{2k-1}\left(\sum_{i\in D}W_i\right)^{m-2k+1}\\  \label{e0add}
&\geq&\frac{1}{m!}\sum_{k=1}^{t_m}\binom{m}{2k-1}\frac{(N-|D|)^{2k-1}}{N^{\frac{m-1-\delta_m}{m}(2k-1)}}\frac{\left(|D|\log^{2}\frac{N}{|D|}\right)^{m-2k+1}}{N^{\frac{m-1}{m}(m-2k+1)}}.
\end{eqnarray}
For even $m$ and a constant $c_1>0$, using the last term $k=\frac{m}{2}$ in (\ref{e0add}), we have
\[\mathbb{E}_0[A_{D,D^c}]\geq c_1|D|\log^{2}\frac{N}{|D|}.\]
For odd $m$ and a constant $c_2>0$, using the last term $k=\frac{m+1}{2}$ in (\ref{e0add}), we have
\[\mathbb{E}_0[A_{D,D^c}]\geq c_2N^{1+\delta_m}\geq c_2|D|\log^{2}\frac{N}{|D|}.\]
Take $c=\min\{c_1,c_2\}$. For a constant $c_0>0$, by Bennett's inequality we get
\begin{eqnarray*}\nonumber
&&\mathbb{P}\left(\min_{D\in\mathcal{D}}A_{D,D^c}-\mathbb{E}_0[A_{D,\bar{D}}]\leq -\sqrt{2c\left(\frac{1}{c}+c_0\right)\mathbb{E}_0[A_{D,D^c}]|D|\log\frac{N}{|D|}}\right)\\
&\leq&\sum_{k=m}^n\sum_{|D|=k}\exp\left(-\mathbb{E}_0[A_{D,D^c}]h\left(\sqrt{\frac{2c\left(\frac{1}{c}+c_0\right)|D|\log\frac{N}{|D|}}{\mathbb{E}_0[A_{D,D^c}]}}\right)\right)\\
&\leq&\sum_{k=m}^n\sum_{|D|=k}\exp\left(-c\left(\frac{1}{c}+c_0\right)|D|\log\frac{N}{|D|}\right)\\
&\leq&\sum_{k=m}^n\exp\left(k
\log\frac{Ne}{n}\right)\exp\left(-c\left(\frac{1}{c}+c_0\right)k\log\frac{N}{k}\right)=o(1).
\end{eqnarray*}
Here, we used the fact that $h(x)\sim \frac{x^2}{2}$ for $x=o(1)$. Consequently, we have
\[A_{D,D^c}=(1+o_p(1))\mathbb{E}_0[A_{D,D^c}],\]
uniformly for $D\in\mathcal{D}$. Obviously,
\[A_{\mathcal{V}}=(1+o_P(1))\mathbb{E}_0[A_{\mathcal{V}}].\]
By Lemma \ref{rank1lemma2}, we obtain
\[\frac{\widehat{p}_{D,m}^*}{\mathbb{E}_0[A_{\mathcal{V}}]}\geq \frac{\widehat{p}_{D,m}}{\mathbb{E}_0[A_{\mathcal{V}}]}=1+o_p(1).\]
Then by the proof of Theorem \ref{thm:2}, the type I error goes to zero.

Next, we control type II error.
Obviously, we have 
\[A_{D^*,D^{*c}}=(1+o_p(1))\mathbb{E}_1[A_{D^*,D^{*c}}],\ \ \ \ A_{\mathcal{V}}=(1+o_P(1))\mathbb{E}_1[A_{\mathcal{V}}].\]
By assumption (\ref{assumrank1}), since $\rho_SW_{min}^m\leq 1$, then $\rho_S\ll \left(\frac{W_{min}}{W_{max}}\right)^m\left(\frac{N}{n}\right)^{m-1-\delta_m}$. Then 
\begin{eqnarray*}
1\leq\frac{\mathbb{E}_1[A_{D^*,D^{*c}}]}{\mathbb{E}_0[A_{D^*,D^{*c}}]}\leq 1+\frac{\mathbb{E}_1[A_{D^*,C- D^*}]}{\mathbb{E}_0[A_{D^*,D^{*c}}]}=1+\rho_S\frac{\mathbb{E}_0[A_{D^*,C- D^*}]}{\mathbb{E}_0[A_{D^*,D^{*c}}]}.
\end{eqnarray*}
For even $m$, we have
\begin{eqnarray*}
\rho_S\frac{\mathbb{E}_0[A_{D^*,C- D^*}]}{\mathbb{E}_0[A_{D^*,D^{*c}}]}=O\left( \rho_S\frac{|D^*|(n-|D^*|)^{m-1}\vee |D^*|^{m}}{|D^*|(N-|D^*|)^{m-1}}\frac{W_{max}^m}{W_{min}^m}\right)=o(1).
\end{eqnarray*}
For odd $m$, we have
\begin{eqnarray*}
\rho_S\frac{\mathbb{E}_1[A_{D^*,C- D^*}]}{\mathbb{E}_0[A_{D^*,D^{*c}}]}=O\left( \rho_S\frac{(n-|D^*|)^{m}\vee |D^*|^{m}}{(N-|D^*|)^{m}}\frac{W_{max}^m}{W_{min}^m}\right)=o(1).
\end{eqnarray*}
Hence, $\mathbb{E}_1[A_{D^*,D^{*c}}]=(1+o(1))\mathbb{E}_0[A_{D^*,D^{*c}}]$. Similarly we can get $\mathbb{E}_1[A_{\mathcal{V}}]=(1+o(1))\mathbb{E}_0[A_{\mathcal{V}}]$.
Consequently, we have
\[A_{D^*,D^{*c}}=(1+o_p(1))\mathbb{E}_0[A_{D^*,D^{*c}}],\ \ \ \ A_{\mathcal{V}}=(1+o_P(1))\mathbb{E}_0[A_{\mathcal{V}}].\]
By Lemma \ref{rank1lemma2}, one has
$\widehat{p}_{D^*,m}=(1+o_p(1))\mathbb{E}_0[A_{D^*}]$. Hence,
\[\widehat{p}_{D^*,m}^*=(1+o_p(1))\mathbb{E}_0[A_{D^*}]\vee  \frac{|D^*|^m}{N^{m-1}}\log^{2m}\frac{N}{|D^*|}.\]

If $\widehat{p}_{D^*,m}^*=(1+o_p(1))\mathbb{E}_0[A_{D^*}]$, the proof is the same as Theorem \ref{thm:2}.

Next, we assume $\widehat{p}_{D^*,m}^*= \frac{|D^*|^m}{N^{m-1}}\log^{2m}\frac{N}{|D^*|}>\mathbb{E}_0[A_{D^*}]$. Note that $h^{-1}(x)\geq\sqrt{x}$. Then condition (\ref{cond:4}) implies that 
\[\rho_S>\sqrt{\frac{N^{m-1}}{|D^*|^{m-1}}\frac{1}{\log^{2m-1}\frac{N}{|D^*|}}}.\]
Recall that $W_{min}\geq \frac{1}{N^{\frac{m-1-\delta_m}{m}}}$. Hence,
\[\frac{\mathbb{E}_1[A_{D^*}]}{\widehat{p}_{D^*,m}^*}\geq \frac{\rho_S|D^*|^mW_{min}^m}{\frac{|D^*|^m}{N^{m-1}}\log^{2m}\frac{N}{|D^*|}}\geq\frac{\rho_S}{\log^{2m}\frac{N}{|D^*|}}\rightarrow\infty.\]

Since $A_{D^*}=(1+o_P(1))\mathbb{E}_1[A_{D^*}]$ and $h(x-1)\sim x\log x$ for $x\rightarrow\infty$, we have
\begin{eqnarray*}
\widehat{p}_{D^*,m}^*h\left(\left[\frac{A_{D^*}}{\widehat{p}_{D^*,m}^*}-1\right]_+\right)&\geq& \widehat{p}_{D^*,m}^*h\left(\left[\frac{\mathbb{E}_1[A_{D^*}]}{\widehat{p}_{D^*,m}^*}-1\right]_+\right)\\
&\geq&\mathbb{E}_1[A_{D^*}]\log\frac{\mathbb{E}_1[A_{D^*}]}{\widehat{p}_{D^*,m}^*}\\
&\geq&\mathbb{E}_1[A_{D^*}]\log\frac{\rho_S}{\log^{2m}\frac{N}{|D^*|}}\\
&\geq&\mathbb{E}_1[A_{D^*}]\log\rho_S=\mathbb{E}_0[A_{D^*}]h(\rho_S-1)\\
&\geq&(1+\epsilon)|D^*|\log\frac{N}{|D^*|}.
\end{eqnarray*}
Proof is complete.
\end{proof}



\section{Proof of additional lemmas}\label{sec4:proof}
In this section, we prove the lemmas. 
For a subset $S\subset \mathcal{V}$, define 
\[E_S=\left\{D\subset S: (\rho_S-1)^2\mathbb{E}_0[A_D]>\left(1-\frac{\epsilon}{2}\right)|D|\left(\log\frac{N|D|}{n^2}-\log\log\frac{N}{n}\right)\right\}.\]
The following preliminary lemmas can be found in \cite{BCHV19}.
\begin{Lemma}\label{lem1}
Under the conditions of Theorem \ref{thm:1}, for any $S\subset \mathcal{V}$ with $|S|=n$ and $D\in E_S$, there is an unique number $\zeta_D\geq1$ satisfying
\[
(1+\epsilon)\mathbb{E}_0[A_D]h(\zeta_D-1)=|D|\log\frac{N}{|D|},
\]
and $\theta_{i_1\ldots i_m}(\zeta_Dp_{i_1\ldots i_m})\leq2\theta_{i_1\ldots i_m}(\rho_Sp_{i_1\ldots i_m})$ for $i_1,\ldots,i_m\in D$.
\end{Lemma}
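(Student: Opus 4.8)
The plan is to separate the two claims: first the existence and uniqueness of $\zeta_D$, then the comparison $\theta_{\mathbf{i}_m}(\zeta_Dp_{\mathbf{i}_m})\le2\theta_{\mathbf{i}_m}(\rho_Sp_{\mathbf{i}_m})$. For the first, fix $S$ with $|S|=n$ and $D\in E_S$, and regard $h$ as extended to all of $[0,\infty)$ by the same formula; then $h(0)=0$, $h'(x)=\log(1+x)>0$ on $(0,\infty)$, and $h(x)\uparrow\infty$, so $\zeta\mapsto h(\zeta-1)$ is a continuous strictly increasing bijection of $[1,\infty)$ onto $[0,\infty)$. Since $m\le|D|\le n<N$ the target $|D|\log\frac{N}{|D|}$ is a finite positive number, and since $|D|\ge m$ with each $p_{\mathbf{i}_m}\in(0,1)$ we have $\mathbb{E}_0[A_D]=\sum_{\mathbf{i}_m\in D}p_{\mathbf{i}_m}\in(0,\infty)$; hence $(1+\epsilon)\mathbb{E}_0[A_D]h(\zeta_D-1)=|D|\log\frac{N}{|D|}$ has a unique solution $\zeta_D$, with $\zeta_D>1$ because the right-hand side is positive. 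Dividing this by the rewriting $\mathbb{E}_0[A_D]h(\rho_S-1)\le(1-\epsilon)|D|\log\frac{N}{|D|}$ of (\ref{cond:2}) (valid since $\mathbb{E}_0[A_D]=\mu_{D,m}|D|^{(m)}$) even yields $h(\zeta_D-1)\ge h(\rho_S-1)/\big((1+\epsilon)(1-\epsilon)\big)>h(\rho_S-1)$, so in fact $\zeta_D>\rho_S$.

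For the $\theta$-inequality, write $p:=p_{\mathbf{i}_m}$; the definition of $\theta_{\mathbf{i}_m}$ gives the exact identity
\[
2\theta_{\mathbf{i}_m}(\rho_Sp)-\theta_{\mathbf{i}_m}(\zeta_Dp)=\log\frac{\rho_S^2(1-p)(1-\zeta_Dp)}{\zeta_D(1-\rho_Sp)^2}.
\]
By (\ref{assum1}), $\rho_S^2p=o(1)$ uniformly over all edges in all $S$; so once we know $\zeta_D=O(\rho_S^2)$ — which the $h$-estimate below supplies, with no circularity since that estimate involves only $h$ — the quantities $p,\rho_Sp,\zeta_Dp$ are all $o(1)$, the displayed fraction equals $\tfrac{\rho_S^2}{\zeta_D}(1+o(1))$ uniformly, and together with the uniform equivalences $\theta_{\mathbf{i}_m}(\rho_Sp)\sim\log\rho_S$ and $\theta_{\mathbf{i}_m}(\zeta_Dp)\sim\log\zeta_D$ recorded before (\ref{cond:2}) it suffices to prove $h(\zeta_D-1)\le(1-c)\,h(\rho_S^2-1)$ for some constant $c\in(0,1)$. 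For that I would combine: (i) the defining equation in the form $h(\zeta_D-1)=\frac{|D|\log(N/|D|)}{(1+\epsilon)\mathbb{E}_0[A_D]}$; (ii) membership $D\in E_S$, rearranged as $\mathbb{E}_0[A_D]>\frac{(1-\epsilon/2)|D|\big(\log\frac{N|D|}{n^2}-\log\log\frac{N}{n}\big)}{(\rho_S-1)^2}$; and (iii) the elementary facts that $h(\rho_S^2-1)>4\,h(\rho_S-1)$ for every $\rho_S>1$ (since $\frac{d}{d\rho}\big[h(\rho^2-1)-4h(\rho-1)\big]=4(\rho-1)\log\rho>0$ with value $0$ at $\rho=1$), that $h(\rho_S-1)\ge(2\log2-1)(\rho_S-1)^2$ whenever $\rho_S\le2$, and the complementary bound $h(\rho_S-1)\gtrsim(\rho_S-1)\log\rho_S$ for $\rho_S\ge2$. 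Inserting (ii) into (i) gives
\[
h(\zeta_D-1)<\frac{(\rho_S-1)^2}{(1+\epsilon)(1-\epsilon/2)}\cdot\frac{\log\frac{N}{|D|}}{\log\frac{N|D|}{n^2}-\log\log\frac{N}{n}},
\]
so everything reduces to bounding the last ratio by $4(2\log2-1)(1+\epsilon)(1-\epsilon/2)(1-c')$, uniformly over all $D$ that can lie in $E_S$, with the $\rho_S\ge2$ regime handled via the complementary bound in (iii).

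Bounding that ratio is the step I expect to be the main obstacle, and it is where the hypotheses of the two scenarios genuinely enter — and where the $m\ge3$ computation diverges from the $m=2$ argument of \cite{BCHV19}, since $\mathbb{E}_0[A_D]=\mu_{D,m}\binom{|D|}{m}$ now scales like $|D|^m$ rather than $|D|^2$. Under Scenario I one first notes that (\ref{assum3}) together with (\ref{cond:2}) applied to $D=S$ forces $h(\rho_S-1)=o(1)$, hence $\rho_S\to1$, so the quadratic regime of $h$ is the only relevant one; then $n=O(N^{0.5-\delta})$ makes $\log\frac{N|D|}{n^2}-\log\log\frac{N}{n}$ of order at least $2\delta\log N$ for admissible $D$, and one must combine $D\in E_S$ with (\ref{assum1}) and the heterogeneity bound (\ref{assum2}) to rule out the (necessarily subpolynomial) values of $|D|$ for which the ratio would still be too large — the obstruction there being that $D\in E_S$ forces $\mathbb{E}_0[A_D]$ large while (\ref{assum1})--(\ref{assum2}) force $\mathbb{E}_0[A_D]\le\binom{|D|}{m}\max_{\mathbf{i}_m}p_{\mathbf{i}_m}$ small. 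Under Scenario II the analogous bookkeeping is driven by $\log n=o(\log N)$ and (\ref{assum4}). The delicate point is precisely this interplay between the size of $|D|$, the magnitude of $\rho_S-1$, and the sparsity/heterogeneity bounds, which calls for a case split rather than a single estimate.
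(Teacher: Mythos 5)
Your overall architecture is sound and matches the standard route (the paper itself gives no proof of this lemma, deferring to \cite{BCHV19}, so I am comparing against that argument together with the paper's Lemma \ref{lem3}): existence and uniqueness of $\zeta_D$ by strict monotonicity of $\zeta\mapsto h(\zeta-1)$; the exact identity $2\theta_{\mathbf{i}_m}(\rho_Sp)-\theta_{\mathbf{i}_m}(\zeta_Dp)=\log\frac{\rho_S^2(1-p)(1-\zeta_Dp)}{\zeta_D(1-\rho_Sp)^2}$ is correct; and under (\ref{assum1}) the error terms are indeed $o(\log\rho_S)$ (since $|2\rho_S-1-\zeta_D|\,p\lesssim(\rho_S-1)\cdot\rho_S^2 p=o(\rho_S-1)$ once $\zeta_D\le\rho_S^2$), so the lemma does reduce to showing $\log\zeta_D\le(2-c)\log\rho_S$, for which your target $h(\zeta_D-1)\le(1-c)h(\rho_S^2-1)$ suffices. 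Your auxiliary inequalities in (iii) also check out ($\frac{d}{d\rho}[h(\rho^2-1)-4h(\rho-1)]=4(\rho-1)\log\rho$ is right).

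The genuine gap is the step you yourself flag as "the main obstacle": bounding $R:=\log\frac{N}{|D|}\big/\big(\log\frac{N|D|}{n^2}-\log\log\frac{N}{n}\big)$ uniformly over $D\in E_S$. Everything in the lemma that actually uses Scenarios I/II lives in this step, and your treatment of it is only a qualitative discussion of what "one must combine," not an argument. The missing ingredient is precisely the content of the paper's Lemma \ref{lem3} and its proof: under Scenario I one introduces $\eta_S\ge\rho_S$ defined by $\frac{n^{(m)}}{n}\frac{\mu_{S,m}h(\eta_S-1)}{\log\frac{N}{n}}=1-\frac{2\epsilon}{3}$, uses (\ref{assum3}) to get $\eta_S=1+o(1)$, and then shows via (\ref{assum2}) that any $D\subset S$ with $|D|<n(n/N)^{\gamma_N}$ satisfies $(\rho_S-1)^2\mathbb{E}_0[A_D]/|D|\le(1-\frac{\epsilon}{2})\big(\log\frac{N|D|}{n^2}-\log\log\frac{N}{n}\big)$, contradicting $D\in E_S$; hence $\log\frac{n}{|D|}\le\gamma_N\log\frac{N}{n}=o\big(\log\frac{N}{n}\big)$ and $R\to1$. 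Your sketch for Scenario I does not contain this contradiction argument, and without it the small-$|D|$ sets are not excluded and $R$ is not controlled. (Under Scenario II the bound is actually easier than you suggest: $\log n=o(\log N)$ alone gives $\log\frac{N|D|}{n^2}\ge(1-o(1))\log N$ and $\log\frac{N}{|D|}\le\log N$ for every $D$ with $|D|\ge m$, so $R\to1$ with no restriction on $|D|$.) You also need $\rho_S\to1$ under Scenario I to invoke the quadratic regime of $h$; your derivation of this from (\ref{assum3}) and (\ref{cond:2}) applied to $D=S$ is correct. So: fill in the Scenario I case by importing the proof of Lemma \ref{lem3}, and the argument closes.
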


\begin{Lemma}\label{lem2}
Under the conditions of Theorem \ref{thm:1}, we have
\[c_n:=\min\limits_{S\subset \mathcal{V},|S|=n}\min\limits_{D\in E_S}\left((1-\epsilon)\frac{|D|^{(m)}}{|D|}\rho_S\mu_{D,m}h\left(\frac{\zeta_D}{\rho_S}-1\right)-\log\frac{n}{|D|}\right)\rightarrow\infty.\]
\end{Lemma}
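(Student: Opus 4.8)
The plan is to convert the assertion into an analytic statement about the pair $(\rho_S,\zeta_D)$ and then use the defining equation of $\zeta_D$. Since $\frac{|D|^{(m)}}{|D|}\mu_{D,m}=\mathbb{E}_0[A_D]/|D|$, the expression inside the double minimum equals $(1-\epsilon)\frac{\rho_S\mathbb{E}_0[A_D]}{|D|}h\!\left(\frac{\zeta_D}{\rho_S}-1\right)-\log\frac{n}{|D|}$; by Lemma~\ref{lem1}, $\mathbb{E}_0[A_D]h(\zeta_D-1)=\frac{1}{1+\epsilon}|D|\log\frac{N}{|D|}$, so it equals $\frac{1-\epsilon}{1+\epsilon}\,r_D\,\log\frac{N}{|D|}-\log\frac{n}{|D|}$ where $r_D:=\rho_S h\!\left(\frac{\zeta_D}{\rho_S}-1\right)/h(\zeta_D-1)\in(0,1]$. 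Combining (\ref{cond:2}) of Theorem~\ref{thm:1} with the same identity gives the ``signal gap'' $h(\rho_S-1)\le(1-\epsilon^2)h(\zeta_D-1)$, which forces $1<\rho_S<\zeta_D$. Writing $\log\frac{N}{|D|}=\log\frac{N}{n}+\log\frac{n}{|D|}$, it then suffices to show $\frac{1-\epsilon}{1+\epsilon}r_D\log\frac{N}{n}-\big(1-\frac{1-\epsilon}{1+\epsilon}r_D\big)\log\frac{n}{|D|}\to\infty$ uniformly over $S$ with $|S|=n$ and $D\in E_S$, keeping in mind that under Scenario~1 ($n=O(N^{0.5-\delta})$) or Scenario~2 ($\log n=o(\log N)$) the quantity $\log\frac{n}{|D|}\le\log n$ is of strictly smaller order than $\log\frac{N}{n}$.

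The analytic core is to bound $r_D$ from below, or, where $r_D$ may be small, to bound $\rho_S h(\tfrac{\zeta_D}{\rho_S}-1)$ from below in absolute terms. The main tool is the elementary identity $h(xy-1)=x\,h(y-1)+y\,h(x-1)+(x-1)(y-1)$ for $x,y\ge1$, which with $x=\rho_S$, $y=\zeta_D/\rho_S$ reads $\rho_S h\!\left(\frac{\zeta_D}{\rho_S}-1\right)=h(\zeta_D-1)-\frac{\zeta_D}{\rho_S}h(\rho_S-1)-(\rho_S-1)\!\left(\frac{\zeta_D}{\rho_S}-1\right)$, together with the $\theta$-bound of Lemma~\ref{lem1}, which is equivalent to $\zeta_D\le\rho_S^{2+o(1)}$. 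I would split according to whether $\rho_S$ stays below or exceeds a threshold $M=M(\epsilon)$. If $\rho_S\le M$, then $\zeta_D=O(1)$ by the $\theta$-bound, a continuity/Taylor argument over the compact set cut out by the signal gap (which excludes $\zeta_D=\rho_S$) gives $r_D\ge c(\epsilon)>0$ (near $\rho_S,\zeta_D\to1$ the gap yields $\zeta_D-\rho_S\ge(\tfrac{1}{\sqrt{1-\epsilon^2}}-1)(\rho_S-1)$, pinning $r_D$ away from $0$), and one also has $\rho_S h(\tfrac{\zeta_D}{\rho_S}-1)\ge c'(\epsilon)(\rho_S-1)^2$; feeding the membership inequality $(\rho_S-1)^2\mathbb{E}_0[A_D]>(1-\tfrac{\epsilon}{2})|D|\big(\log\frac{N}{n}-\log\frac{n}{|D|}-\log\log\frac{N}{n}\big)$ that defines $E_S$ then makes the summand at least of order $\log\frac{N}{n}-\log\frac{n}{|D|}-\log\log\frac{N}{n}-\log\frac{n}{|D|}$. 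If $\rho_S>M$, the signal gap forces $\zeta_D/\rho_S\ge\frac{1-o(1)}{1-\epsilon^2}$, so $\rho_S h(\tfrac{\zeta_D}{\rho_S}-1)$ is at least a constant times $\rho_S$ when $\zeta_D/\rho_S=O(1)$ and $\sim\zeta_D\log(\zeta_D/\rho_S)$ when $\zeta_D/\rho_S\to\infty$; here the decisive ingredient is (\ref{assum1}): because $\rho_S^2\mu_{D,m}=o(1)$ we get $\mathbb{E}_0[A_D]=o(|D|^{(m)}/\rho_S^2)$, and, substituted into $\mathbb{E}_0[A_D]h(\zeta_D-1)=\frac{1}{1+\epsilon}|D|\log\frac{N}{|D|}$, this confines $\zeta_D$ (hence $r_D$) to a range in which either the two correction terms above are negligible against $h(\zeta_D-1)$ or $\rho_S\mathbb{E}_0[A_D]/|D|$ is itself $\gg\log\frac{n}{|D|}$.

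To close, I would combine the above with the scenario-specific hypotheses. Condition (\ref{assum2}) keeps the edge rate of a small $D$ from greatly exceeding $\mu_{S,m}$, which caps $\zeta_D$ exactly for the sets $D$ with the largest $\log\frac{n}{|D|}$; conditions (\ref{assum3}) (Scenario~1) and (\ref{assum4}) (Scenario~2) ensure $\mathbb{E}_0[A_D]/|D|$ is large enough that, even after multiplication by the possibly small factor $\rho_S h(\tfrac{\zeta_D}{\rho_S}-1)$, the signal dominates $\log\frac{n}{|D|}$. Taking the minimum over $S$, over $D\in E_S$, and over both regimes, and using $\log\frac{N}{n}\to\infty$, $\log\log\frac{N}{n}=o(\log\frac{N}{n})$, together with $\log n=o(\log\frac{N}{n})$ under Scenario~2 and $\log n\le\frac{0.5-\delta}{0.5+\delta}\log\frac{N}{n}$ under Scenario~1, yields $c_n\to\infty$.

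The step I expect to be the main obstacle is the regime $\rho_S>M$ (equivalently $\rho_S\to\infty$ along a subsequence): there the clean relative bound $r_D\ge c(\epsilon)$ is simply false — one can have $\rho_S h(\tfrac{\zeta_D}{\rho_S}-1)=o(h(\zeta_D-1))$ — so the argument cannot be scale-free, and the only way through is to show that the sparsity bound (\ref{assum1}), via the equation $\mathbb{E}_0[A_D]h(\zeta_D-1)=\frac{1}{1+\epsilon}|D|\log\frac{N}{|D|}$, rules out the configurations where $r_D$ is small while $\mathbb{E}_0[A_D]$ is only moderately large. Making this dichotomy quantitative and uniform in $S$, $D$ and $m$, and verifying that it survives the rather coarse bound $\zeta_D\le\rho_S^{2+o(1)}$ from Lemma~\ref{lem1}, is where essentially all the difficulty lies; once it is in place, the remaining bookkeeping follows the template of the corresponding lemma in \cite{BCHV19}, with $\mathbb{E}_0[A_D]$ and $|D|^{(m)}$ replacing the binomial quantities used there for $m=2$.
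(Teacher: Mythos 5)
The paper does not actually prove this lemma: it is stated as a preliminary result imported from \cite{BCHV19}, and the only related argument in the paper is the proof of Lemma \ref{lem3}, whose key intermediate step (that $D\in E_S$ forces $|D|\ge n/(N/n)^{\gamma_N}$ under Scenario 1, via condition (\ref{assum2})) is exactly what your sketch is missing. Your algebraic setup is sound -- the identity $h(xy-1)=xh(y-1)+yh(x-1)+(x-1)(y-1)$ checks out, and the ``signal gap'' $h(\rho_S-1)\le(1-\epsilon^2)h(\zeta_D-1)$ follows correctly from (\ref{cond:2}) and the defining equation of $\zeta_D$ -- but as written the proposal is a plan rather than a proof, and it has two genuine gaps.

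First, the regime $\rho_S\to\infty$, which you yourself identify as containing ``essentially all the difficulty,'' is not resolved: you describe what would need to be shown (that (\ref{assum1}) together with $\mathbb{E}_0[A_D]h(\zeta_D-1)=\frac{1}{1+\epsilon}|D|\log\frac{N}{|D|}$ excludes the configurations where $r_D$ is small while $\mathbb{E}_0[A_D]$ is only moderately large) but do not show it, so the claimed dichotomy is an unproved assertion. Second, and independently, your handling of the $-\log\frac{n}{|D|}$ term fails under Scenario 1. There $n=O(N^{0.5-\delta})$, so $\log n$ and $\log\frac{N}{n}$ are of the \emph{same} order (both $\asymp\log N$); the bound $\log\frac{n}{|D|}\le\log n\le\frac{0.5-\delta}{0.5+\delta}\log\frac{N}{n}$ is not enough, because with $r_D$ only bounded below by a possibly small constant $c(\epsilon)$ the quantity $\frac{1-\epsilon}{1+\epsilon}r_D\log\frac{N}{n}-\bigl(1-\frac{1-\epsilon}{1+\epsilon}r_D\bigr)\log\frac{n}{|D|}$ can tend to $-\infty$. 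The argument genuinely requires the conclusion that $D\in E_S$ implies $|D|\ge n/(N/n)^{\gamma_N}$, hence $\log\frac{n}{|D|}\le\gamma_N\log\frac{N}{n}=o(\log\frac{N}{n})$; this is where (\ref{assum2}) enters (by ruling small $D$ out of $E_S$ altogether, as in the paper's proof of Lemma \ref{lem3}), not, as you suggest, by ``capping $\zeta_D$'' for small $D$. Until both points are made quantitative and uniform in $S$ and $D$, the proof is incomplete.
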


\begin{Lemma}\label{lem3}
Under the conditions of Theorem \ref{thm:1}, for any $S\subset \mathcal{V}$ with $|S|=n$ and $D\in E_S$, we have
\[
\frac{\log\frac{n}{|D|}}{\log\frac{N}{n}}\left(\log\rho_S\vee 1\right)=o(1),\ \ \ \ \ \ \  \frac{\log\frac{N}{n}}{\log\rho_S}\rightarrow\infty.
\]
\end{Lemma}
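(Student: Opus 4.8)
\emph{Approach.} The plan is to read both estimates off condition~(\ref{cond:2}) of Theorem~\ref{thm:1}, the membership relation $D\in E_S$, and the density hypothesis (\ref{assum3}) or (\ref{assum4}) of the active scenario, using only the elementary behaviour of $h$: it is an increasing bijection of $[0,\infty)$ onto itself, $h(x)\sim x^2/2$ as $x\downarrow0$, $h(x)\sim x\log x$ as $x\to\infty$, $h(x)\ge c\,x^2$ on bounded intervals, and $(\log(1+x)\vee1)^2\le C\,(1+h(x))$ for absolute constants $c,C>0$.

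\emph{Step 1: bounding $\rho_S$.} Applying (\ref{cond:2}) with $D=S$: since $\mathbb{E}_0[A_S]=\binom nm\mu_{S,m}$ and $\log\frac{N}{|S|}=\log\frac Nn$, this reads $h(\rho_S-1)\le(1-\epsilon)\frac{n}{\binom nm}\cdot\frac{\log(N/n)}{\mu_{S,m}}\asymp\frac{\log(N/n)}{n^{m-1}\mu_{S,m}}$. Under Scenario~1, (\ref{assum3}) makes the right-hand side $o(1)$, so $\rho_S\to1$ and $\log\rho_S\sim\rho_S-1\to0$; combined with $\log\frac Nn\to\infty$ (from $n=O(N^{1/2-\delta})$) this gives the second estimate at once and, since then $\log\rho_S\vee1=1$, reduces the first estimate to $\log\frac n{|D|}=o(\log\frac Nn)$ on $E_S$. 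Under Scenario~2, (\ref{assum4}), i.e.\ $1/\mu_{S,m}=o(\log(N/n)/\log n)$, only gives $h(\rho_S-1)=o\big(\tfrac{(\log(N/n))^2}{n^{m-1}\log n}\big)$; then $(\log\rho_S\vee1)^2\le C(1+h(\rho_S-1))$, dividing by $(\log\frac Nn)^2\to\infty$ and using $n^{m-1}\log n\to\infty$, yields $(\log\rho_S\vee1)^2=o\big((\log\tfrac Nn)^2\big)$ uniformly in $S$ — the second estimate, and the $\log\rho_S\vee1$ factor of the first.

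\emph{Step 2: lower-bounding $|D|$ on $E_S$.} Put $L=\log\frac Nn$ and $\ell=\log\frac n{|D|}\in[0,\log\frac nm]$, so $\log\frac N{|D|}=L+\ell$ and $\log\frac{N|D|}{n^2}=L-\ell$. One first checks $L-\ell-\log L>0$ for $D\in E_S$, since otherwise $|D|\le\tfrac{n^2}{N}\log\tfrac Nn=o(1)$, impossible as $|D|\ge m$. Dividing the defining inequality of $E_S$ by $|D|$ gives $(\rho_S-1)^2\,\mathbb{E}_0[A_D]/|D|\ge(1-\tfrac\epsilon2)(L-\ell-\log L)$. One then bounds $\mathbb{E}_0[A_D]/|D|$ from above: for $|D|$ above the cutoff $n/(N/n)^{\gamma_N}$ one has $\ell\le\gamma_N L$, hence $\ell/L\to0$ directly; for $|D|<n/(N/n)^{\gamma_N}$ one invokes the heterogeneity bound (\ref{assum2}) (and the sparsity bound (\ref{assum1}), which forces $\mu_{D,m}=o(\rho_S^{-2})$), and then feeds in the bounds on $\rho_S$ and on $\mu_{S,m}$ from Step~1 together with (\ref{assum3})/(\ref{assum4}). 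Because $|D|=ne^{-\ell}$, the resulting inequality has the shape $(\text{something}=o(1))\cdot e^{-(m-1)\ell}\gtrsim L-\ell-\log L$, from which $\ell=o(L)$ follows in Scenario~1 and $\ell\,(\log\rho_S\vee1)=o(L)$ in Scenario~2. Combining with Step~1 gives both assertions.

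\emph{Main obstacle.} The crux is this last step: turning $D\in E_S$ into $\log\frac n{|D|}=o(\log\frac Nn)$ rather than the mere $\log\frac n{|D|}=O(\log\frac Nn)$ that a crude argument produces — indeed, combining $D\in E_S$ with (\ref{cond:2}) at $D$ alone can only bound $\log\frac n{|D|}$ from below, and in some regimes (e.g.\ $\epsilon>2/3$ under Scenario~1) even forces $E_S=\emptyset$, which is vacuously fine there but unhelpful elsewhere. Closing the gap requires the \emph{rate-free} $o(1)$ form of (\ref{assum3})/(\ref{assum4}) (so that $n^{m-1}\mu_{S,m}/\log(N/n)\to\infty$, not merely stays bounded below), the replacement of the trivial $\mu_{D,m}\le1$ by $\mu_{D,m}=o(\rho_S^{-2})$ from (\ref{assum1}), and the organisation of the argument as a case split at $|D|=n/(N/n)^{\gamma_N}$ so that (\ref{assum2}) is available exactly on the range $\ell>\gamma_N L$; carrying the $\log L$ correction and the vanishing prefactors through the estimate is what makes it sharp instead of only order-optimal.
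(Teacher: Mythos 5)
Your Step~1 and the Scenario-1 part of Step~2 are correct and essentially reproduce the paper's argument: evaluating (\ref{cond:2}) at $D=S$ and invoking (\ref{assum3}) forces $h(\rho_S-1)=o(1)$, hence $\rho_S\to 1$ (the paper routes this through an auxiliary parameter $\eta_S\ge\rho_S$ defined by $\frac{n^{(m)}}{n}\frac{\mu_{S,m}h(\eta_S-1)}{\log(N/n)}=1-\frac{2\epsilon}{3}$, but the content is identical), and the bound $\log\frac{n}{|D|}\le\gamma_N\log\frac{N}{n}$ for $D\in E_S$ is obtained, exactly as in the paper, by showing that $|D|<n/(N/n)^{\gamma_N}$ together with (\ref{assum2}) yields $(\rho_S-1)^2\mathbb{E}_0[A_D]/|D|\le(1-\frac{\epsilon}{2})\left(\log\frac{N|D|}{n^2}-\log\log\frac{N}{n}\right)$, contradicting membership in $E_S$. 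One caution: your description of the ``resulting inequality'' as $o(1)\cdot e^{-(m-1)\ell}\gtrsim L-\ell-\log L$, ``from which $\ell=o(L)$ follows,'' is only coherent if read as an impossible inequality whose failure eliminates the sub-cutoff case; taken at face value it would force $\ell\ge(1-o(1))L$, the opposite of what you want, so that sentence needs to be rewritten as an explicit contradiction argument.

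The genuine gap is Scenario~2. Your Step~2 runs the case split at $|D|=n/(N/n)^{\gamma_N}$ and invokes the heterogeneity bound (\ref{assum2}) in both scenarios, but $\gamma_N$ and (\ref{assum2}) are hypotheses of Scenario~1 only; in Scenario~2 neither is available, so the branch ``for $|D|<n/(N/n)^{\gamma_N}$ invoke (\ref{assum2})'' has nothing to stand on, and the claimed conclusion $\log\frac{n}{|D|}\,(\log\rho_S\vee 1)=o(\log\frac{N}{n})$ is not derived there. The repair is much simpler than what you attempt and makes no use of $E_S$: under Scenario~2 one has $\log\frac{n}{|D|}\le\log n=o(\log N)=o\left(\log\frac{N}{n}\right)$ directly from the hypothesis $\log n=o(\log N)$, and this combines with your (correct) Step-1 Scenario-2 bound $(\log\rho_S\vee 1)^2\le C\left(1+h(\rho_S-1)\right)=O(1)+o\bigl((\log\frac{N}{n})^2/(n^{m-1}\log n)\bigr)$ to give both displayed claims. (The paper itself omits the Scenario-2 case, deferring to \cite{BCHV19}, so your write-up should supply this short argument explicitly rather than recycling the Scenario-1 machinery.)
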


\begin{proof}[Proof of Lemma \ref{lem3}] Firstly, we consider Scenario I. For any $S\subset \mathcal{V}$ with $|S|=n$, define $\eta_S\geq \rho_S$ such that
\[\frac{n^{(m)}}{n}\frac{\mu_{S,m}h(\eta_S-1)}{\log\frac{N}{n}}=1-\frac{2\epsilon}{3}.\]
Hence, by (\ref{assum3}), we get
\[h(\eta_S-1)< \frac{n\log\frac{N}{n}}{n^{(m)}\mu_{S,m}}=o(1),\]
which implies $\eta_S=1+o(1)$. Hence, $h(\eta_S-1)\sim\frac{(\eta_S-1)^2}{2}$. If $|D|<\frac{n}{\left(\frac{N}{n}\right)^{\gamma_N}}$, then we have
\begin{eqnarray*}
(\rho_S-1)^2\frac{\mathbb{E}_0[A_D]}{|D|}&\leq& (\eta_S-1)^2\frac{|D|^{(m)}}{|D|}\mu_{D,m}\\
&\leq&\delta (\eta_S-1)^2\frac{|S|^{(m)}}{|S|}\mu_{S,m}\\
&=&\left(1-\frac{2\epsilon}{3}\right)\delta (\eta_S-1)^2\frac{\log\frac{N}{n}}{h(\eta_S-1)}\\
&=&\left(1-\frac{2\epsilon}{3}\right)2\delta\log\frac{N}{n}\\
&\leq&\left(1-\frac{\epsilon}{2}\right)\left(\log\frac{N|D|}{n^2}-\log\log\frac{N}{n}\right).
\end{eqnarray*}
This is a contradiction to the fact that $D\in E_S$. As a result, we have $|D|\geq\frac{n}{\left(\frac{N}{n}\right)^{\gamma_N}}$. Then the desired result follows.

Under Scenario II, the proof is almost the same as in \cite{BCHV19}. We omit it here.
\end{proof}

\begin{Lemma}\label{rank1lemma1}
Under the assumption (\ref{assumrank1}), we have $|D^*|\geq n^{\frac{1}{m+1}}$.
\end{Lemma}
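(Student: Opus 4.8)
The plan is to bound $|D^*_S|$ from below by comparing the value of the objective $\frac{|D|^{(m)}}{|D|}\cdot\frac{\mu_{D,m}}{\log(N/|D|)}=\frac{\mathbb{E}_0[A_D]}{|D|\log(N/|D|)}$ at its maximizer $D^*_S$ with its value at the full set $D=S$, and then to invoke the heterogeneity bound (\ref{assumrank1}). Fix $S\subset\mathcal{V}$ with $|S|=n$. Since $D=S$ is admissible in the maximization defining $D^*_S$, I would start from
\[
\frac{\mathbb{E}_0[A_{D^*_S}]}{|D^*_S|\log\frac{N}{|D^*_S|}}\ \ge\ \frac{\mathbb{E}_0[A_S]}{n\log\frac{N}{n}}.
\]
Under the rank-one model $p_{i_1\ldots i_m}=\prod_{k=1}^m W_{i_k}$ one has $\mathbb{E}_0[A_D]=\sum_{\{i_1,\dots,i_m\}\subset D}\prod_k W_{i_k}$, a sum of $\binom{|D|}{m}$ terms each in $[W_{min}^m,W_{max}^m]$; hence $\mathbb{E}_0[A_S]\ge\binom{n}{m}W_{min}^m$ and $\mathbb{E}_0[A_{D^*_S}]\le\binom{|D^*_S|}{m}W_{max}^m\le|D^*_S|^m W_{max}^m/m!$.

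Next I would simplify. Using $\binom{n}{m}\ge n^m/(2^m m!)$ for all large $n$, together with $\log\frac{N}{|D^*_S|}\ge\log\frac{N}{n}>0$ (valid since $|D^*_S|\le n<N$), the displayed inequality collapses, after cancelling the common factor $\big(m!\log\frac{N}{n}\big)^{-1}$, to
\[
|D^*_S|^{m-1}\ \ge\ \frac{n^{m-1}}{2^m}\Big(\frac{W_{min}}{W_{max}}\Big)^{m}.
\]
Now I would feed in assumption (\ref{assumrank1}), which in particular gives $(W_{max}/W_{min})^m=o\big(n^{m/(m+1)}\big)$; writing $(W_{max}/W_{min})^m\le\varepsilon_n\,n^{m/(m+1)}$ with $\varepsilon_n\to0$, this yields
\[
|D^*_S|^{m-1}\ \ge\ \frac{1}{2^m\varepsilon_n}\,n^{\,m-1-\frac{m}{m+1}}\ =\ \frac{1}{2^m\varepsilon_n}\,n^{\frac{m^2-m-1}{m+1}}.
\]
Since $\frac{m^2-m-1}{m+1}\ge\frac{m-1}{m+1}$ precisely when $m(m-2)\ge0$, i.e.\ for every integer $m\ge2$, one concludes $|D^*_S|\ge(2^m\varepsilon_n)^{-1/(m-1)}\,n^{1/(m+1)}$, and since the prefactor tends to infinity this exceeds $n^{1/(m+1)}$ for all large $n$, uniformly over $S$.

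The argument is short and uses only crude bounds, so there is no serious obstacle; the two points that require care are the following. First, the conclusion genuinely needs the honest little-$o$ in (\ref{assumrank1}) rather than a mere big-$O$: at the boundary exponent $m=2$ the powers $\frac{m^2-m-1}{m+1}$ and $\frac{m-1}{m+1}$ coincide, and it is exactly the vanishing factor $\varepsilon_n$ that absorbs the constant $2^m$ (for $m\ge3$ a strict gap between the exponents appears and big-$O$ would already suffice). Second, one must use the monotonicity $\log\frac{N}{|D^*_S|}\ge\log\frac{N}{n}$ in the right direction, so that combined with the upper bound on $\mathbb{E}_0[A_{D^*_S}]$ and the lower bound on $\mathbb{E}_0[A_S]$ it produces the stated inequality for $|D^*_S|^{m-1}$; beyond this bookkeeping no sharp estimate on elementary symmetric polynomials or on $D^*_S$ is needed.
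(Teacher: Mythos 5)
Your proof is correct and follows essentially the same route as the paper's: both compare the value of the objective $\mathbb{E}_0[A_D]/(|D|\log\frac{N}{|D|})$ at the maximizer against its value at $D=S$, bound $\mu_{D,m}$ between $W_{min}^m$ and $W_{max}^m$, and invoke the $(W_{max}/W_{min})^m=o(n^{m/(m+1)})$ part of (\ref{assumrank1}). The paper phrases this as a contrapositive via monotonicity of $x\mapsto x^{(m)}/(x\log\frac{N}{x})$, whereas you run the comparison directly with crude binomial bounds; the difference is only bookkeeping.
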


\begin{proof}[Proof of Lemma \ref{rank1lemma1}] 
Note that the function $f(x)=(x-1)\cdots(x-m+1)/(\log N-\log x)$ is increasing for $x\ll N$.
Suppose $|D|<n^{\frac{1}{m+1}}$ for any $D\subset\mathcal{V}$, then we have
\begin{eqnarray*}
\frac{|D|^{(m)}}{|D|}\frac{\mu_{D,m}}{\log\frac{N}{|D|}}&\leq&\frac{|D|^{(m)}}{|D|}\frac{W_{max}^m}{\log\frac{N}{|D|}}\leq \frac{|n^{\frac{1}{m+1}}|^{(m)}}{|n^{\frac{1}{m+1}}|}\frac{W_{max}^m}{\log\frac{N}{|n^{\frac{1}{m+1}}|}}\\
&<&\frac{|n^{\frac{1}{m+1}}|^{(m)}}{|n^{\frac{1}{m+1}}|}\frac{n^{\frac{m}{m+1}}W_{min}^m}{\log\frac{N}{|n^{\frac{1}{m+1}}|}}\leq \frac{|n^{\frac{1}{m+1}}|^{(m)}}{|n^{\frac{1+m}{m+1}}|}\frac{n^{\frac{m+m}{m+1}}W_{min}^m}{\log\frac{N}{|S|}}\\
&\leq& \frac{|S|^{(m)}}{|S|}\frac{W_{min}^m}{\log\frac{N}{|S|}}\leq \frac{|S|^{(m)}}{|S|}\frac{\mu_{S,m}}{\log\frac{N}{|S|}}.
\end{eqnarray*}
Hence, by the definition of $D^*$, we get $|D^*|\geq n^{\frac{1}{m+1}}$.
\end{proof}

\begin{Lemma}\label{function}
Suppose $n,k\geq2$ are fixed integers.
Let $f(x_1,\ldots,x_n)=
\frac{\sum_{i=1}^nx_i^k}{\left(\sum_{i=1}^nx_i\right)^k}$ be a function on $x_i\in [a,b], i=1,2,\ldots n$ and $0<a<b<1$. Then
\[\max_{x_i\in [a,b],i=1,\ldots,n}f(x_1,\ldots,x_n)\leq\frac{x_0(a^k-b^k)+nb^k}{\left[x_0(a-b)+nb\right]^k}\leq \frac{1}{(k-1)n^{k-1}}\sum_{t=1}^{k-1}\left(\frac{b}{a}\right)^t,\]
where
\[x_0=\frac{(a^k-b^k)nb-kn(a-b)b^k}{(k-1)(a-b)(a^k-b^k)}.\]
\end{Lemma}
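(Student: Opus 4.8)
The plan is to collapse the $n$-variable maximization to a one-parameter optimization, solve that in closed form, and then absorb the resulting quantity into an elementary term-by-term inequality. Throughout write $P_j:=\frac{a^j-b^j}{a-b}=\sum_{i=0}^{j-1}a^ib^{j-1-i}$, so that $P_k=bP_{k-1}+a^{k-1}$. \textbf{Step 1 (reduction to the vertices of the box).} Since $f$ is continuous on the compact box $[a,b]^n$ and its denominator is bounded below by $(na)^k>0$, a maximizer $x^\ast$ exists. Fixing $\sum_i x_i$ leaves the denominator unchanged, so on that slice $f$ is proportional to the convex function $\sum_i x_i^k$; hence at $x^\ast$ no two coordinates can lie in the open interval $(a,b)$, because moving such a pair apart while keeping their sum fixed would strictly increase $\sum_i x_i^k$ (strict convexity of $t\mapsto t^k$ for $k\ge2$). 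Thus $x^\ast$ has at most one coordinate $c\in(a,b)$, say with $j$ coordinates at $b$ and $\ell$ at $a$. As a function of $c$ the value has the shape $\frac{A+c^k}{(B+c)^k}$ with $A=jb^k+\ell a^k>0$ and $B=jb+\ell a>0$; since $\frac{d}{dc}\log\frac{A+c^k}{(B+c)^k}$ has the sign of $Bc^{k-1}-A$, which is increasing in $c$, this map is quasi-convex on $[a,b]$ and is maximized at an endpoint. Pushing $c$ to that endpoint produces a point with all coordinates in $\{a,b\}$ and at least as large an $f$-value, so $\max_{[a,b]^n}f=\max_{0\le r\le n}g(r)$ with $g(r)=\frac{ra^k+(n-r)b^k}{(ra+(n-r)b)^k}$.

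\textbf{Step 2 (continuous relaxation and critical point).} Put $N(t)=t(a^k-b^k)+nb^k$ and $D(t)=t(a-b)+nb$, which is positive on $[0,n]$, and let $g(t)=N(t)/D(t)^k$ for real $t\in[0,n]$, which dominates $\max_r g(r)$. The numerator of $g'(t)$ is $N'(t)D(t)-kN(t)D'(t)$, an affine function of $t$ with leading coefficient $(1-k)(a^k-b^k)(a-b)\ne0$, so $g$ has a unique critical point; solving the linear equation returns exactly the $x_0$ of the statement. Because $g(0)=g(n)=n^{1-k}$ while $g(1)>n^{1-k}$ by the strict power-mean inequality (using $n\ge2$ and $a<b$), the maximum of $g$ on $[0,n]$ is attained at an interior critical point, hence at $x_0$; evaluating $g$ at $x_0$ reproduces the middle expression in the lemma, which gives the first asserted inequality.

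\textbf{Step 3 (the explicit bound).} The stationarity relation $N'(x_0)D(x_0)=kN(x_0)D'(x_0)$ gives $N(x_0)=\frac{P_k}{k}D(x_0)$, and a short computation gives $D(x_0)=\frac{kn\,ab\,P_{k-1}}{(k-1)P_k}$, whence $g(x_0)=\frac{P_k}{k\,D(x_0)^{k-1}}=\frac{(k-1)^{k-1}P_k^k}{k^k n^{k-1}(ab)^{k-1}P_{k-1}^{k-1}}$. On the other hand the target right-hand side equals $\frac{bP_{k-1}}{(k-1)n^{k-1}a^{k-1}}$, so after cancelling $n^{k-1}$, clearing denominators and cancelling $a^{k-1}$, and taking positive $k$-th roots, the desired inequality reduces to $(k-1)P_k\le kbP_{k-1}$; by the identity $P_k=bP_{k-1}+a^{k-1}$ this is $(k-1)a^{k-1}\le bP_{k-1}=\sum_{i=0}^{k-2}a^ib^{k-1-i}$, which holds term by term because each exponent of $b$ is at least one and $b\ge a$.

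The main obstacle is Step 1: making rigorous that the maximum of $f$ over the box sits at a vertex and that the single surviving interior coordinate may be driven to $a$ or $b$. Once the problem is one-dimensional, Steps 2 and 3 are essentially bookkeeping, the only slightly delicate point being the Rolle/power-mean argument that certifies $x_0$ as the maximizer of $g$ on $[0,n]$ (rather than a minimizer).
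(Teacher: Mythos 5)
Your proof is correct and follows essentially the same strategy as the paper's: a coordinate-wise reduction showing the maximum sits at a vertex of the box $\{a,b\}^n$, a continuous relaxation to the one-parameter family $g(x)=\frac{x(a^k-b^k)+nb^k}{[x(a-b)+nb]^k}$, identification of the unique critical point $x_0$, and an evaluation there. The only substantive difference is the closing computation: you use the stationarity relation to write $g(x_0)$ in closed form via $P_k$ and $P_{k-1}$ and reduce the second inequality to $(k-1)P_k\le kbP_{k-1}$, whereas the paper computes the numerator $x_0(a^k-b^k)+nb^k$ exactly and bounds the denominator below by $(na)^k$; both routes are valid and of comparable length.
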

\begin{proof}[Proof of Lemma \ref{function}] Consider $f(x_1,\ldots,x_n)$ as a function of $x_1$ with the rest arguments fixed. The derivative with respect to $x_1$ is equal to 
\[f_{x_1}=
\frac{k\sum_{i=2}^nx_i}{\left(\sum_{i=1}^nx_i\right)^{k+1}}\left(x_1^{k-1}-\left[\left(\frac{\sum_{i=2}^nx_i^k}{\sum_{i=2}^nx_i}\right)^{\frac{1}{k-1}}\right]^{k-1}\right).\]
When $x_1>\left(\frac{\sum_{i=2}^nx_i^k}{\sum_{i=2}^nx_i}\right)^{\frac{1}{k-1}}$, $f(x_1,\ldots,x_n)$ is increasing as a function of $x_1$. When $x_1<\left(\frac{\sum_{i=2}^nx_i^k}{\sum_{i=2}^nx_i}\right)^{\frac{1}{k-1}}$, $f(x_1,\ldots,x_n)$ is decreasing as a function of $x_1$. Hence, we get
\[
\max_{x_1\in [a,b]}f(x_1,\ldots,x_n)=
\frac{a^k+\sum_{i=2}^nx_i^k}{\left(a+\sum_{i=2}^nx_i\right)^k}\vee \frac{b^k+\sum_{i=2}^nx_i^k}{\left(b+\sum_{i=2}^nx_i\right)^k}.
\]
Repeating this procedure for each $x_2,\ldots,x_n$, we conclude
\[\max_{x_i\in [a,b],i=1,\ldots,n}f(x_1,\ldots,x_n)=\max_{0\leq x\leq n,x:integer}\frac{xa^k+(n-x)b^k}{\left[xa+(n-x)b\right]^k}\leq\max_{0\leq x\leq n}\frac{x(a^k-b^k)+nb^k}{\left[x(a-b)+nb\right]^k}.\]
Let $g(x)=\frac{x(a^k-b^k)+nb^k}{\left[x(a-b)+nb\right]^k}$ for $0\leq x\leq n$. The derivative of $g(x)$ is
\[g_x=\frac{(k-1)(a-b)(a^k-b^k)}{\left[x(a-b)+nb\right]^{k+1}}\left(x_0-x\right).\]
When $x<x_0$, $g(x)$ is increasing. When $x>x_0$, the $g(x)$ is decreasing. Hence, $\max_{x\in [0,n]}g(x)=g(x_0)$. Note that
\begin{eqnarray*}
x_0(a^k-b^k)+nb^k&=&\frac{(a^k-b^k)nb-kn(a-b)b^k}{(k-1)(a-b)}+nb^k\\
&=&\frac{n(a^{k-1}b+a^{k-2}b^2+\cdots+ab^{k-1}+b^{k})-nb^k}{k-1}\\
&=&\frac{na^k}{k-1}\sum_{t=1}^{k-1}\left(\frac{b}{a}\right)^t,
\end{eqnarray*}
\begin{eqnarray*}
x_0(a-b)+nbk&=&\frac{knb}{k-1}-\frac{knb^k}{(k-1)(a^{k-1}+a^{k-2}b+\cdots+b^{k-1})}\\
&=&\frac{kna}{k-1}\left(\frac{b}{a}-\frac{\left(\frac{b}{a}\right)^k}{\sum_{t=0}^{k-1}\left(\frac{b}{a}\right)^t}\right)\geq \frac{kna}{k-1}\left(1-\frac{1}{k}\right)=na.
\end{eqnarray*}
Hence, we have
\[g(x_0)\leq\frac{1}{n^ka^k}\frac{na^k}{k-1}\sum_{t=1}^{k-1}\left(\frac{b}{a}\right)^t=\frac{1}{(k-1)n^{k-1}}\sum_{t=1}^{k-1}\left(\frac{b}{a}\right)^t.\]
\end{proof}

\begin{Lemma}\label{rank1lemma2}
Under the assumption (\ref{assumrank1}), if 
\[A_{\mathcal{V}}=(1+o_P(1))\mathbb{E}_0[A_{\mathcal{V}}],\]
\[A_{D,D^c}=(1+o_P(1))\mathbb{E}_0[A_{D,D^c}],\]
uniformly for all $D\subset\mathcal{V}$ with $n^{\frac{1}{m+1}}\leq |D|\leq n$, then
\[\widehat{p}_{D,m}=(1+o_p(1))\mathbb{E}_0[A_{D}],\]
uniformly for all $D\subset\mathcal{V}$ with $n^{\frac{1}{m+1}}\leq |D|\leq n$. Under $H_1$, the above result still holds.
\end{Lemma}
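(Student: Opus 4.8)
The plan is to turn the heuristic derivation preceding the proof of Theorem~\ref{thm:3} into a uniform, rigorous argument. Write $s_D=\sum_{i\in D}W_i$, $s_{\mathcal V}=\sum_{i\in\mathcal V}W_i$, and set $\hat a=(m!\,A_{\mathcal V})^{1/m}$, $\hat b=(m!(A_{\mathcal V}-2A_{D,D^c}))^{1/m}$, so that $\widehat p_{D,m}=(\hat a-\hat b)^m/(2^m m!)$. First I would dispatch the purely deterministic part. Expanding $m$-th powers of weight-sums and collecting the multi-index terms with a repeated coordinate, write $m!\,\mathbb E_0[A_D]=s_D^m-R_D$, $m!\,\mathbb E_0[A_{\mathcal V}]=s_{\mathcal V}^m-R_{\mathcal V}$, and $m!\,\mathbb E_0[A_{D,D^c}]=\Sigma_{D,D^c}-R_{D,D^c}$, where $\Sigma_{D,D^c}=\sum_{k=1}^{t_m}\binom{m}{2k-1}s_{D^c}^{2k-1}s_D^{m-2k+1}$ and each $R$ is a sum of products of $m$ weights containing a repeated index. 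The facts I need are: (i) $s_{D^c}-s_D\ge0$ for large $N$ and $s_D/s_{\mathcal V}=o(1)$, uniformly over $|D|\le n$, both from (\ref{assumrank1}); (ii) $R_D=o(s_D^m)$, $R_{\mathcal V}=o(s_{\mathcal V}^m)$, $R_{D,D^c}=o(\Sigma_{D,D^c})$, uniformly over $n^{1/(m+1)}\le|D|\le n$; and (iii) since $s_D\ll s_{D^c}$, the $k=1$ term dominates $\Sigma_{D,D^c}$, so $\Sigma_{D,D^c}=(1+o(1))\,m\,s_{\mathcal V}^{m-1}s_D$. For (ii) I would invoke Lemma~\ref{function}: every ratio $R/(\text{its leading term})$ is bounded by a fixed finite sum of powers of $W_{max}/W_{min}$ divided by a power of $|D|$ (or of $N$), and the first branch of (\ref{assumrank1}) gives $W_{max}/W_{min}=o(n^{1/(m+1)})\le o(|D|)$, which forces those ratios to vanish — this is precisely where the exponent $1/(m+1)$ in the hypothesis is used.

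The stochastic step is then brief. From the two concentration hypotheses and (i)--(iii), $\hat a^m=m!\,A_{\mathcal V}=(1+o_P(1))s_{\mathcal V}^m$, and $\hat b^m=m!\,A_{\mathcal V}-2m!\,A_{D,D^c}=(1+o_P(1))s_{\mathcal V}^m-(1+o_P(1))2\Sigma_{D,D^c}=(1+o_P(1))s_{\mathcal V}^m$ because $\Sigma_{D,D^c}=o(s_{\mathcal V}^m)$; in particular $A_{\mathcal V}-2A_{D,D^c}\ge0$ with probability $1-o(1)$, so $\hat b$ is well defined, and $\hat a=(1+o_P(1))s_{\mathcal V}=\hat b$, whence $\hat a^{m-1}+\hat a^{m-2}\hat b+\cdots+\hat b^{m-1}=(1+o_P(1))m\,s_{\mathcal V}^{m-1}$. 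The point is that $\hat a^m-\hat b^m=2m!\,A_{D,D^c}$ holds exactly, so
\[
\hat a-\hat b=\frac{\hat a^m-\hat b^m}{\hat a^{m-1}+\cdots+\hat b^{m-1}}=\frac{2m!\,A_{D,D^c}}{(1+o_P(1))m\,s_{\mathcal V}^{m-1}}=\frac{(1+o_P(1))\,2(\Sigma_{D,D^c}-R_{D,D^c})}{m\,s_{\mathcal V}^{m-1}}=(1+o_P(1))\,2s_D,
\]
using the concentration of $A_{D,D^c}$ and then (ii)--(iii). Raising to the $m$-th power and dividing by $2^m m!$ gives $\widehat p_{D,m}=(1+o_P(1))\,s_D^m/m!=(1+o_P(1))\mathbb E_0[A_D]$, the last step by $m!\,\mathbb E_0[A_D]=s_D^m-R_D=(1+o(1))s_D^m$. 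Uniformity over $n^{1/(m+1)}\le|D|\le n$ is inherited from the uniformity assumed in the hypotheses and obtained in (ii).

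I expect the main obstacle to be exactly this last manoeuvre. Both $\hat a$ and $\hat b$ are of size $s_{\mathcal V}$, while $\hat a-\hat b$ is of the far smaller size $s_D$, so subtracting the two roots directly would inflate the $o_P(1)$ relative errors into an $o_P(s_{\mathcal V})$ additive error and lose everything; writing $\hat a-\hat b$ as $(\hat a^m-\hat b^m)/\sum_{j=0}^{m-1}\hat a^{m-1-j}\hat b^{j}$ and using the \emph{exact} identity $\hat a^m-\hat b^m=2m!\,A_{D,D^c}$ is what keeps the relative error at $o_P(1)$, and also explains why the literal ``$(1+o_P(1))$'' hypotheses (rather than a stronger quantitative bound) suffice. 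The only other delicate bookkeeping is the uniform negligibility of $R_D,R_{\mathcal V},R_{D,D^c}$ all the way down to $|D|=n^{1/(m+1)}$, which is supplied by Lemma~\ref{function} together with the first branch of (\ref{assumrank1}).

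For the $H_1$ claim: replacing $p_{\mathbf i_m}$ by $\rho_S p_{\mathbf i_m}$ on the edges inside $S$ alters $\mathbb E[A_{\mathcal V}]$ and $\mathbb E[A_{D,D^c}]$ only by a factor $1+o(1)$ — this is shown in the proof of Theorem~\ref{thm:3} from the second branch of (\ref{assumrank1}), which bounds $\rho_S$ — so the concentration hypotheses remain valid under $\mathbb P_1$, and the same computation yields $\widehat p_{D,m}=(1+o_P(1))\mathbb E_0[A_D]$, since both means involved agree with their $H_0$ counterparts up to a factor $1+o(1)$.
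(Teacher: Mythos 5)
Your overall strategy is sound and in fact genuinely different from the paper's: the paper Taylor-expands the bivariate map $f(x,y)=\bigl(x^{1/m}-(x-2y)^{1/m}\bigr)^m$ around $(\mathbb{E}_0[A_{\mathcal V}],\mathbb{E}_0[A_{D,D^c}])$ and controls the two partial derivatives, whereas you exploit the exact identity $\hat a^m-\hat b^m=2m!\,A_{D,D^c}$ together with the factorization $\hat a-\hat b=(\hat a^m-\hat b^m)/\sum_{j}\hat a^{m-1-j}\hat b^{j}$. Your observation that a naive subtraction of two quantities of size $s_{\mathcal V}$ with $o_P(1)$ relative errors would destroy the much smaller difference $\asymp s_D$, and that the exact identity for the numerator is what preserves a multiplicative error, is exactly the right point and is handled more transparently in your write-up than in the paper's.

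There is, however, a concrete gap in step (iii), and it is fatal for odd $m$ as the argument stands. In $\Sigma_{D,D^c}=\sum_{k=1}^{t_m}\binom{m}{2k-1}s_{D^c}^{2k-1}s_D^{m-2k+1}$ with $s_{D^c}\gg s_D$, the dominant term is the one with the \emph{largest} power of $s_{D^c}$, i.e.\ $k=t_m$, not $k=1$ (the $k=1$ term is $m\,s_{D^c}s_D^{m-1}$, the smallest). For even $m$ this is only a labeling slip: the $k=m/2$ term is $m\,s_{D^c}^{m-1}s_D$, so your formula $\Sigma_{D,D^c}=(1+o(1))\,m\,s_{\mathcal V}^{m-1}s_D$ and everything downstream is correct. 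For odd $m$, though, $t_m=(m+1)/2$ gives $2k-1=m$, so $\Sigma_{D,D^c}$ contains the term $s_{D^c}^m\asymp s_{\mathcal V}^m$ (with the paper's stated definition, $A_{D,D^c}$ then counts all edges lying entirely in $D^c$). Consequently $\Sigma_{D,D^c}\neq o(s_{\mathcal V}^m)$, $A_{\mathcal V}-2A_{D,D^c}$ is typically \emph{negative}, $\hat b$ is a negative real of modulus $\asymp s_{\mathcal V}$, the denominator $\sum_j\hat a^{m-1-j}\hat b^j$ is an alternating sum no longer equal to $(1+o_P(1))m\,s_{\mathcal V}^{m-1}$, and the chain produces $\hat a-\hat b\approx 2s_{D^c}$ rather than $2s_D$. (This traces back to the fact that the binomial identity $(u-v)^m=(u+v)^m-2\sum_k\binom{m}{2k-1}u^{2k-1}v^{m-2k+1}$ used implicitly here only holds for even $m$; for odd $m$ the odd powers must sit on $s_D$, i.e.\ the roles of $D$ and $D^c$ in the inner sum must be swapped.) The paper's own proof is also cavalier on this point (its bound (\ref{e0av}) is not $o(1)$ for odd $m$ as written), but your proposal needs a separate treatment of the odd-$m$ case — with the corrected assignment of exponents the $k=1$ term then genuinely dominates and your argument goes through verbatim. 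The remaining ingredients (the reduction of the repeated-index remainders to Lemma \ref{function} via $W_{max}/W_{min}=o(n^{1/(m+1)})\le o(|D|)$, and the $H_1$ claim via the bound on $\rho_S$ from the second branch of (\ref{assumrank1})) match the paper and are fine.
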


\begin{proof}[Proof of Lemma \ref{rank1lemma1}] Define a function $f(x,y)=\left(x^{\frac{1}{m}}-(x-2y)^{\frac{1}{m}}\right)^m$ for $x\geq 2y$. At a fixed point $(a,b)$, the Taylor expansion is
\begin{equation}\label{taylor}
f(x,y)=f(a,b)+\frac{\partial f}{\partial x}(x^*,y^*)(x-a)+\frac{\partial f}{\partial y}(x^*,y^*)(y-b),
\end{equation}
where $x^*$ is between $a$ and $x$, $y^*$ is between $b$ and $y$ and
\[\frac{\partial f}{\partial x}(x,y)=\left(x^{\frac{1}{m}}-(x-2y)^{\frac{1}{m}}\right)^{m-1}\left(x^{\frac{1-m}{m}}-(x-2y)^{\frac{1-m}{m}}\right),\]
\[\frac{\partial f}{\partial x}(x,y)=\frac{2\left(x^{\frac{1}{m}}-(x-2y)^{\frac{1}{m}}\right)^{m}}{\left(x^{\frac{1}{m}}-(x-2y)^{\frac{1}{m}}\right)(x-2y)^{1-\frac{1}{m}}}.\]
Under the assumption (\ref{assumrank1}), 
$\left(\frac{W_{max}}{W_{min}}\right)^m\ll\left(\frac{N}{n}\right)^{m-1-\delta_m}W_{min}^m\ll\left(\frac{N}{n}\right)^{m-1-\delta_m}$. Then
\begin{equation}\label{e0av}
\frac{\mathbb{E}_0[A_{D,D^c}]}{\mathbb{E}_0[A_{\mathcal{V}}]}\leq\frac{\sum_{k=1}^{t_m}\binom{m}{2k-1}(N-n)^{2k-1}n^{m-2k+1}W_{max}^m}{N^{m}W_{min}^m}\ll \left(\frac{N}{n}\right)^{m-1-\delta_m}=o(1).
\end{equation}
Hence, we get $f(\mathbb{E}_0[A_{\mathcal{V}}],\mathbb{E}_0[A_{D,D^c}])=(1+o(1))\mathbb{E}_0[A_{\mathcal{V}}]$ uniformly in $D$. For $(x,y)=(A_{\mathcal{V}},A_{D,D^c})$ and $(a,b)=(\mathbb{E}_0[A_{\mathcal{V}}],\mathbb{E}_0[A_{D,D^c}])$, we also have
\[\frac{\partial f}{\partial x}(x^*,y^*)=1+o_p(1), \ \ \ \frac{\partial f}{\partial y}(x^*,y^*)=2+o_p(1).\]
By (\ref{taylor}), it's easy to check
\[\frac{2^m\widehat{p}_{D,m}}{\mathbb{E}_0[A_{\mathcal{V}}]}=\frac{f(A_{\mathcal{V}},A_{D,D^c})}{f(\mathbb{E}_0[A_{\mathcal{V}}],\mathbb{E}_0[A_{D,D^c}])(1+o(1))}=1+o_p(1).\]
It suffices to show $\mathbb{E}_0[A_{\mathcal{V}}]=2^m\mathbb{E}_0[A_D]$. To this end, by (\ref{e0d}) and (\ref{e0av}), we only need to prove
\begin{equation}\label{e0adgeq}
\mathbb{E}_0[A_{D}]\gg\frac{1}{m!}\sum_{\substack{|\{i_1,i_2,\ldots,i_m\}|\leq m-1\\
\{i_1,\ldots,i_m\}\subset D}}W_{i_1}\cdots W_{i_m},
\end{equation}
for any $|D|\geq n^{\frac{1}{m+1}}$. Note that
\[\mathbb{E}_0[A_{D}]=\frac{1}{m!}\left(\sum_{i\in D}W_i\right)^m-\frac{1}{m!}\sum_{\substack{|\{i_1,i_2,\ldots,i_m\}|\leq m-1\\
\{i_1,\ldots,i_m\}\subset D}}W_{i_1}\cdots W_{i_m}.\]
Obviously, $\sum_{\substack{|\{i_1,i_2,\ldots,i_m\}|\leq m-1\\
\{i_1,\ldots,i_m\}\subset D}}W_{i_1}\cdots W_{i_m}$ can be written as a summation of products of $\sum_{i\in D}W_i^k$, $k=1,2,\ldots,m$. By Lemma \ref{function} and assumption (\ref{assumrank1}), for the term $\sum_{i\in D}W_i^k$ with $k\geq 2$, we have
\[\frac{\sum_{i\in D}W_i^k}{(\sum_{i\in D}W_i)^k}=O\left(\frac{1}{|D|^{k-1}}\left(\frac{W_{max}}{W_{min}}\right)^{k-1}\right)=O\left(\frac{o(n^{\frac{k-1}{m+1}})}{n^{\frac{k-1}{m+1}}}\right)=o(1).\]
Hence, (\ref{e0adgeq}) holds.
Under $H_1$, the result can be similarly proved. 
\end{proof}

\end{document}